\definecolor{darkblue}{rgb}{0.0,0.0,0.65}
\definecolor{darkred}{rgb}{0.68,0.05,0.0}
\definecolor{darkgreen}{rgb}{0.0,0.29,0.29}
\definecolor{darkpurple}{rgb}{0.47,0.09,0.29}
\newcommand{\AL}[1]{\textcolor{blue}{\textbf{AL:}{ #1}}}
\newcommand{\JR}[1]{\textcolor{purple}{\textbf{JR:}{ #1}}}
\newcommand{\calX}{\mathcal{X}}
\newcommand{\xb}{\mathbf{x}}
\newcommand{\zb}{\mathbf{z}}
\newcommand{\barf}{\mathfrak{F}} 
\renewcommand{\emptyset}{\varnothing}
\theoremstyle{definition}
\newtheorem{proposition}{Proposition}
\newtheorem{lemma}{Lemma}
\newtheorem{corollary}{Corollary}
\newtheorem*{definition}{Definition}
\newtheorem*{proposition*}{Proposition}
\def\eqref#1{equation~\ref{#1}}
\def\1{\bm{1}}
\def\vone{{\bm{1}}}
\DeclareMathAlphabet{\mathsfit}{\encodingdefault}{\sfdefault}{m}{sl}
\SetMathAlphabet{\mathsfit}{bold}{\encodingdefault}{\sfdefault}{bx}{n}
\newcommand{\E}{\mathbb{E}}
\DeclareMathOperator*{\argmin}{arg\,min}
\title{Neural Set Function Extensions: \\ Learning with Discrete Functions in High Dimensions}
\author{%
  Nikolaos Karalias$^*$ \\
  EPFL\\
  \texttt{nikolaos.karalias@epfl.ch} \\
  \And
  Joshua Robinson\thanks{Equal contribution.} \\
   MIT CSAIL \\
   \texttt{joshrob@mit.edu} \\
     \And
  Andreas Loukas \\
  Prescient Design, Genentech, Roche\\
   \texttt{andreas.loukas@roche.com} \\
    \And
  Stefanie Jegelka \\
   MIT CSAIL \\
   \texttt{stefje@csail.mit.edu} \\
}
\begin{document}

\maketitle

\begin{abstract}

Integrating functions on discrete domains into neural networks is key to developing their capability to reason about discrete objects. But, discrete domains are (I) not naturally amenable to gradient-based optimization, and (II) incompatible with deep learning architectures that rely on representations in high-dimensional vector spaces. In this work, we address both difficulties for set functions, which capture many important discrete problems.
First, we develop a framework for extending set functions onto low-dimensional continuous domains, where many extensions are naturally defined. Our framework subsumes many well-known extensions as special cases. Second, to avoid undesirable low-dimensional neural network bottlenecks, we convert low-dimensional extensions into representations in high-dimensional spaces, taking inspiration from the success of semidefinite programs for combinatorial optimization. Empirically, we observe benefits of our extensions for unsupervised neural combinatorial optimization, in particular with high-dimensional representations.
\end{abstract}
\section{Introduction}

While neural networks are highly effective at solving tasks grounded in basic perception \citep{chen2020simple,vaswani2017attention}, discrete algorithmic and combinatorial tasks such as partitioning graphs, and finding optimal routes or shortest paths have proven more challenging. This is, in part, due to the difficulty of integrating discrete operations into neural network architectures \citep{battaglia2018relational,bengio2021machine,cappart2021combinatorial}. One immediate difficulty with functions on discrete spaces is that they are not amenable to standard gradient-based training. Another is that discrete functions  are typically expressed in terms of scalar (e.g., Boolean) variables for each item (e.g., node, edge to be selected), in contrast to the high-dimensional and continuous nature of neural networks' internal representations. A natural approach to addressing these challenges is to carefully choose a function on a continuous domain that \emph{extends} the discrete function, and can be used as a drop-in replacement.

 There are several important desiderata that such an extension should satisfy in order to be suited to neural network training. First, an extension should be valid,
 i.e., agree with the discrete function on discrete points. It should also be amenable to gradient-based optimization, and should avoid introducing spurious minima. Beyond these requirements, there is one additional critical consideration. In both machine learning and optimization, it has been observed that high-dimensional representations can make problems ``easier''. For instance, neural networks rely on high-dimensional internal representations for representational power and to allow information to flow through gradients, and performance suffers considerably when undesirable low-dimensional bottlenecks are introduced into network architectures \citep{belkin2019reconciling,VELICKOVIC2021100273}. In optimization, \emph{lifting} to higher-dimensional spaces can make the problem more well-behaved \citep{goemans1995improved,shawe2004kernel,du2018gradient}. Therefore,  extending discrete functions to \emph{high-dimensional} domains may be critical to the effectiveness of the resulting learning process, yet remains largely an open problem.

With those considerations in mind, we propose a framework for constructing extensions of discrete set functions onto high-dimensional continuous spaces.
  The core idea is to view a continuous point $\xb$ in space as an expectation over a distribution (that depends on $\xb$) supported on a few carefully chosen discrete points, to retain tractability. To evaluate the discrete  function at $\xb$, we compute the expected value of the set function over this distribution. The method resulting from a principled formalization of this idea is computationally efficient and addresses the key challenges of building continuous extensions. Namely, our extensions allow gradient-based optimization and address the dimensionality concerns,  allowing any function on sets to be used as a computation step in a neural network.  

First, to enable gradient computations, we present a method based on a linear programming (LP) relaxation for constructing extensions on continuous domains where exact gradients can be computed using standard automatic differentiation software \citep{abadi2016tensorflow,bastien2012theano,paszke2019pytorch}. Our approach allows task-specific considerations  (e.g., a cardinalilty constraint) to be built into the extension design. While our initial LP formulation handles gradients, and is a natural formulation for explicitly building extensions, it replaces discrete Booleans with  scalars in the unit interval $[0,1]$, and hence does not yet address potential dimensionality bottlenecks. Second, to enable higher-dimensional representations, we take inspiration from classical SDP relaxations, such as  the celebrated Goemans-Williamson maximum cut algorithm \citep{goemans1995improved}, which recast low-dimensional problems in high-dimensions. Specifically, our key contribution is to develop an SDP analog of our original LP formulation, and show how to \emph{lift} LP-based extensions into a corresponding high-dimensional SDP-based extensions. Our general procedure for lifting low-dimensional representations into higher dimensions aligns with the neural algorithmic reasoning blueprint \citep{VELICKOVIC2021100273}, and suggests that classical techniques such as SDPs may be effective tools for combining deep learning with algorithmic processes more generally. 

\section{Problem Setup}
\label{sec: method}
Consider a ground set $[n]=\{1,\ldots , n\}$ and an arbitrary function $f : 2^{[n]} \rightarrow \mathbb{R} \cup \{\infty\}$ defined on subsets of $[n]$. For instance, $f$ could determine
if a set of nodes or edges in a graph has some structural
property, such as being a path, tree, clique, or independent set \citep{bello2016neural,cappart2021combinatorial}. Our aim is to build neural networks that use such discrete functions $f$ as an intermediate layer or loss. In order to produce a model that is trainable using standard auto-differentiation software, we consider a continuous domain $\calX$ onto which we would like to extend $f$, with sets  embedded into $\calX$ via an injective map $e : 2^{[n]} \rightarrow \calX$. For instance, when  $\calX=[0,1]^n$ we may take $e(S) = \mathbf{1}_S$, the Boolean vector whose $i$th entry is $1$ if $i \in S$, and $0$ otherwise. Our approach is to design an extension 
\begin{align*}
    \barf : \calX \rightarrow  \mathbb{R}
\end{align*}
of $f$ and consider the neural network $\text{NN}_2 \circ \barf \circ\text{NN}_1$ (if $f$ is used as a loss, $\text{NN}_2$ is simply the identity).
To ensure that the extension is \textit{valid} and amenable to automatic differentiation, we require that 1) it agrees with $f$ on all discrete points: 
 $\barf(e(S)) = f(S) \  \text{for all} \  S \subseteq [n] \  \text{with} \  f(S) < \infty $, and 2) $\barf$ is continuous. 

There is a rich existing literature on extensions of functions on discrete domains, particularly in the context of discrete optimization \citep{lovasz1983submodular,gls81,caliChVon11,vondrak08, bach2019submodular,obozinski2012convex,tawarmalani2002convex}. These works provide promising tools to reach our goal of neural network training. 
Building on these, our method is the first to use semi-definite programming (SDP) to combine neural networks with set functions.  There are, however, different considerations in the neural network setting as compared to optimization.  The optimization literature often focuses on a class of set functions and aims to build  extensions with desirable optimization properties, particularly convexity. We do not focus on convexity, aiming instead to develop a formalism that is as flexible as possible. Doing so maximizes the applicability of our method, 
and allows extensions adapted to task-specific desiderata (see Section \ref{sec: constructing scalar SFEs}).

\section{Scalar Set Function Extensions}\label{sec: scalar SFEs}

We start by presenting a general framework for extending set functions onto $\calX = [0,1]^n$, where a set $S \subseteq [n]$ is viewed as the Boolean indicator vector  $e(S) = \mathbf{1}_S \in \{0,1\}^n$ whose $i$th entry is $1$ if $i \in S$ and $0$ otherwise. We call extensions onto $[0,1]^n$ \emph{scalar} since each item $i$ is represented by a single scalar value---the $i$th coordinate of $\xb \in \calX$. These scalar extensions will become the core building blocks in developing high-dimensional extensions in Section \ref{sec: neural SFEs}.


  \begin{figure*}[t] 
   \begin{center}
     \includegraphics[width=0.8\textwidth]{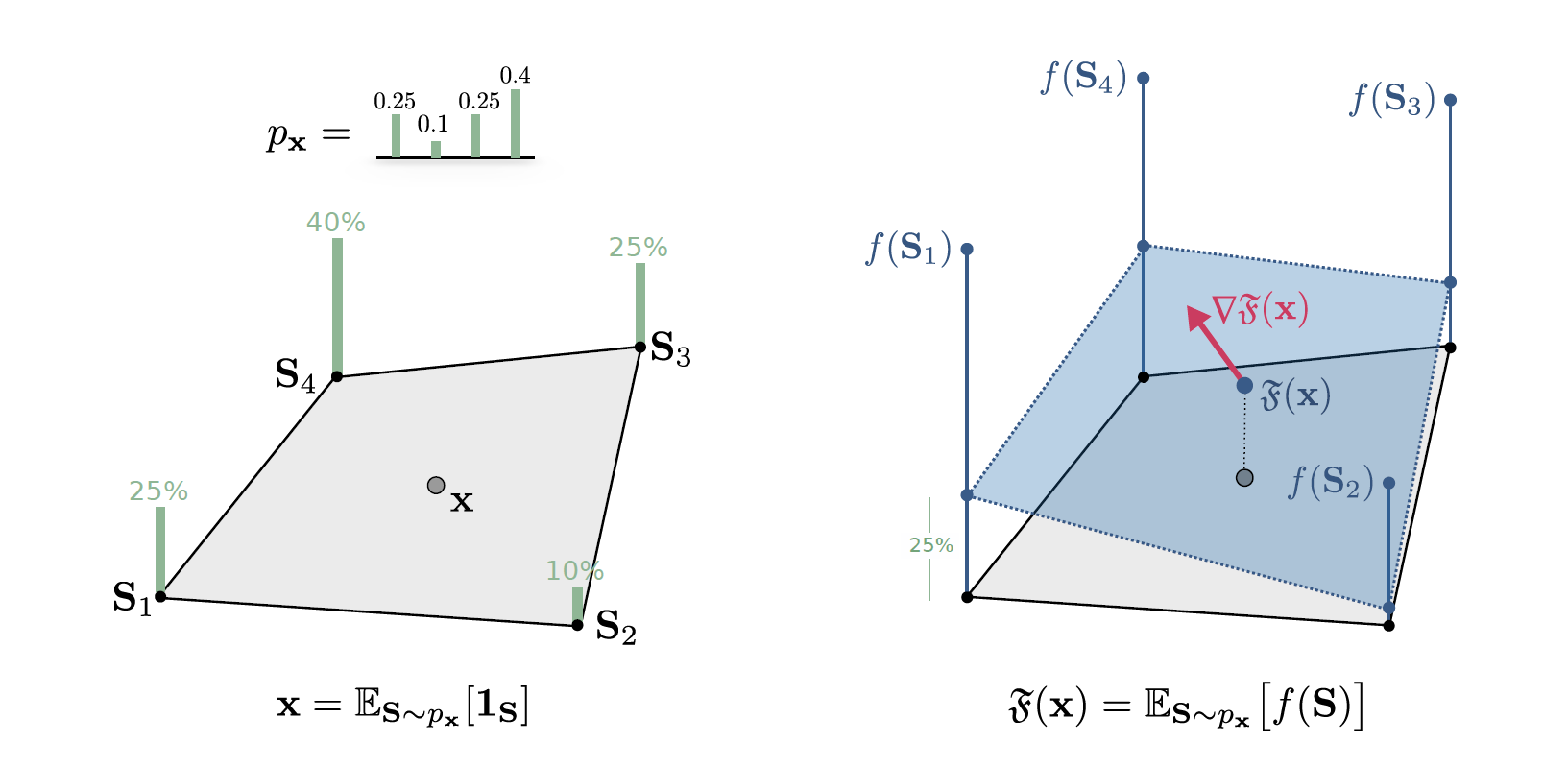}
       \vspace{-10pt}
       \parbox{140mm}{
     \caption{\textbf{SFEs:} Fractional points $\xb$ are reinterpreted as expectations $\xb = \mathbb{E}_{S \sim p_\xb} [\mathbf{1}_S]$ over the distribution $p_\xb(S)$ on sets. A value  is assigned at $\xb$ by exchanging the order of $f$ and the expectation: $\barf (\xb)_{S \sim p_\xb}[f(S)]$. Unlike $f$, the extension $\barf$ is amenable to gradient-based optimization.}
     \label{fig:idea}
     }  
   \end{center}
 \end{figure*}
 
A classical approach to extending discrete functions on sets represented as Boolean indicator vectors $\mathbf{1}_S$ is by 
computing the convex-envelope, i.e., the point-wise supremum over linear functions that lower bound $f$ \citep{falk1976successive,bach2019submodular}.
Doing so yields a convex function whose value at a point $\mathbf{x} \in [0,1]^n$ is the solution of the following linear program (LP):
%
%
\begin{align*}
    \widetilde{\barf}(\xb) = \underset{\mathbf{z},b \in \mathbb{R}^n \times \mathbb{R}}{\max}\{ \xb^{\top}\zb +b\} 
    \ \text{ subject to } \  \mathbf{1}_S ^\top \mathbf{z} +b  \leq f(S) \  \text{ for all } S \subseteq [n].   \tag{primal LP}
\end{align*}
%
The set $\mathcal{P}_f$ of all feasible solutions $(\mathbf{z},b)$ is known as the \emph{(canonical) polyhedron of $f$} \citep{obozinski2012convex} and can be seen to be non-empty by taking the coordinates of $\zb$ to be sufficiently small (possibly negative). Variants of this optimization program are frequently encountered in the theory of matroids and submodular functions \citep{edmonds2003submodular} where $\mathcal{P}_f$ is commonly known as the \emph{submodular polyhedron} (see Appendix \ref{app: primaldual} for an extended discussion). 
By strong duality, we may solve the primal LP by instead solving its dual:
%
\begin{align}
    \widetilde{\barf}(\xb) = \underset{\{y_S\geq 0\}_{S \subseteq [n]}}{\min} \sum_{S \subseteq [n]} y_S f(S) \nonumber
    \text{ subject to } \sum_{S \subseteq [n]} y_S \mathbf{1}_S = \xb, \  \sum_{S \subseteq [n]} y_S=1, \ \text{ for all } S\subseteq [n], \tag{dual LP}
\end{align} \nonumber
whose optimal value is the same as the primal LP. The dual LP is always feasible (see e.g., the Lov\'asz extension in Section \ref{sec: constructing scalar SFEs}). However,  $\widetilde{\barf}$ does not necessarily agree with $f$ on discrete points in general, unless the function is convex-extensible \citep{murota1998discrete}.

To address this important missing piece, we relax our goal from solving the dual LP to instead seeking a \emph{feasible} solution to the dual LP that \emph{is} an extension of $f$.  Since the dual LP is defined for a fixed $\xb$, a feasible solution must be a function $y_S = p_\xb (S)$ of $\xb$. If $p_\xb$ were to be continuous and a.e. differentiable in $\xb$ then the value $\sum_{S} p_\xb (S) f(S)$ attained by the dual LP  would also be continuous and a.e. differentiable in $\xb$ since gradients flow through the coefficients $y_S= p_\xb (S)$, while $f(S)$ is treated as a constant in $\xb$. This leads us to the following definition:
\begin{definition}[Scalar SFE]
A scalar SFE $\barf$ of $f$ is defined at a point $\xb \in [0,1]^n$ by coefficients $p_\xb(S)$ such that  $y_S=p_\xb (S)$ is a feasible solution to the dual LP. The extension value is given by $$\barf (\xb) \triangleq \sum_{S \subseteq [n]} p_\xb (S) f(S)$$ and we require the following properties to hold for all $S \subseteq [n]$: 1) $p_\xb(S)$ is a continuous function of $\xb$ and 2) $\barf(\mathbf{1}_S)=f(S)$ for all $S \subseteq [n]$. 
\end{definition}
%
Efficient evaluation of $\barf$ requires that $p_\xb(S)$ is supported  on a small collection of carefully chosen sets $S$. This choice is a key inductive bias of the extension, and  Section \ref{sec: constructing scalar SFEs} gives many examples with only $ O(n)$ non-zero coefficients. 
Examples include well-known extensions, such as the Lov\'asz extension, as well as a number of novel extensions, illustrating the versatility of the SFE framework. 

Thanks to the constraint $\sum_S y_S =1$ in the dual LP, scalar SFEs have a natural probabilistic interpretation. An SFE is defined by a probability distribution $p_\xb$ such that fractional points $\xb$ can be written as an expectation $\mathbb{E}_{S \sim p_\xb} [\mathbf{1}_S] = \xb$ over discrete points using $p_\xb$. The extension itself can be viewed as arising from exchanging $f$ and the expectation operation: $\barf(\xb) = \mathbb{E}_{S \sim p_\xb} [f(S)]$. This interpretation is summarized in Figure \ref{fig:idea}.

Scalar SFEs also enjoy the property of not introducing any spurious minima. That is, the minima of $\barf$ coincide with the minima of $f$ up to convex combinations. This property is especially important when training models of the form $f \circ \text{NN}_1$ (i.e., $f$ is a loss function) since $\barf$ will guide the network $\text{NN}_1$ towards the same solutions as $f$. 

%
\begin{proposition}[Scalar SFEs have no bad minima]\label{prop: nice properties of extension}
If $\barf$ is a scalar SFE of $f$ then:
\begin{enumerate}
    \item $\min_{\xb\in  \calX} \barf(\xb)=\min_{S \subseteq [n]} f(S) $
    \item 
    $\argmin_{\xb\in \calX} \barf(\xb) \subseteq \text{Hull} \big (\argmin_{\mathbf{1}_S : S \subseteq [n]} f(S) \big )$ 
\end{enumerate}
\end{proposition}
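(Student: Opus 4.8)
The plan is to exploit the dual-LP feasibility constraints that every scalar SFE satisfies at each point $\xb$: namely $\sum_S p_\xb(S) \mathbf{1}_S = \xb$, $\sum_S p_\xb(S) = 1$, and $p_\xb(S) \geq 0$. These say precisely that $\barf(\xb) = \sum_S p_\xb(S) f(S)$ is a convex combination of the values $\{f(S)\}_{S \subseteq [n]}$, so it cannot dip below $\min_{S} f(S)$; and combined with the validity property $\barf(\mathbf{1}_S) = f(S)$, this pins down the minimum exactly.

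For part 1, I would argue both inequalities. The direction $\min_{\xb \in \calX} \barf(\xb) \leq \min_{S} f(S)$ is immediate: take any minimizing set $S^\star$, and since $\mathbf{1}_{S^\star} \in \calX = [0,1]^n$ and $\barf(\mathbf{1}_{S^\star}) = f(S^\star)$ by validity, the continuous minimum is at most $f(S^\star)$. The reverse direction $\barf(\xb) \geq \min_S f(S)$ for every $\xb$ follows from the convex-combination observation: $\barf(\xb) = \sum_S p_\xb(S) f(S) \geq \sum_S p_\xb(S) \min_{T} f(T) = \min_T f(T)$, using nonnegativity of the coefficients and $\sum_S p_\xb(S) = 1$. (One should note that $f$ takes values in $\mathbb{R} \cup \{\infty\}$, but this is harmless: for $\xb$ in the domain of $\barf$ the dual LP is feasible with finite objective, so all sets in the support of $p_\xb$ have $f(S) < \infty$; and the minimum over sets is finite whenever $f$ is not identically $\infty$.)

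For part 2, suppose $\xb^\star \in \argmin_{\xb} \barf(\xb)$, so $\barf(\xb^\star) = \min_S f(S) =: m$ by part 1. Write $\barf(\xb^\star) = \sum_S p_{\xb^\star}(S) f(S)$ as a convex combination equal to $m$, where each term satisfies $f(S) \geq m$. A convex combination of reals all $\geq m$ equals $m$ only if every term with positive weight equals $m$; hence $p_{\xb^\star}(S) > 0$ implies $f(S) = m$, i.e. $\mathbf{1}_S \in \argmin_{\mathbf{1}_T} f(T)$. Then $\xb^\star = \sum_S p_{\xb^\star}(S) \mathbf{1}_S$ is a convex combination of points in $\{\mathbf{1}_S : f(S) = m\}$, so $\xb^\star \in \mathrm{Hull}(\argmin_{\mathbf{1}_S} f(S))$, which is exactly the claimed inclusion.

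I do not anticipate a serious obstacle; the argument is essentially a packaging of "$\barf(\xb)$ is a convex combination of the $f(S)$" plus validity. The only points needing care are bookkeeping around the $\infty$ values (handled by restricting attention to the support of $p_\xb$, on which $f$ is finite by dual feasibility), and being explicit that "$\argmin$" here denotes the set of all minimizers so that the Hull statement is meaningful. If anything is mildly delicate it is confirming that $\argmin_{\xb \in \calX}\barf(\xb)$ is nonempty and that the minimum over sets is attained, but both are trivial since $2^{[n]}$ is finite and (for the continuous side) we only ever exhibit explicit minimizers of the form $\mathbf{1}_{S^\star}$.
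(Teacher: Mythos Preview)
Your proposal is correct and follows essentially the same argument as the paper: both directions of part~1 come from validity plus the fact that $\barf(\xb)=\sum_S p_\xb(S)f(S)$ is a convex combination of the $f(S)$, and part~2 follows by noting that tightness of this convex-combination lower bound forces the support of $p_{\xb^\star}$ to lie on discrete minimizers, whence $\xb^\star=\sum_S p_{\xb^\star}(S)\mathbf{1}_S$ lies in their hull. Your additional remarks about $\infty$-values and attainment are reasonable bookkeeping that the paper leaves implicit.
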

See Appendix \ref{app: vector SFE proofs} for proofs. 


\textbf{Obtaining set solutions.} Given an architecture $\barf \circ\text{NN}_1$ and input problem instance $G$, we often wish to produce sets as outputs at inference time. To do this, we simply compute $\xb = \text{NN}_1(G)$, and select the set $S$ in $\text{supp}_S\{  p_\xb ( S) \}$  with the smallest value $f(S)$. This can be done efficiently if, as is typically the case, the cardinality of $\text{supp}_S\{  p_\xb ( S) \}$ is small.


\subsection{Constructing Scalar Set Function Extensions}\label{sec: constructing scalar SFEs}

A key characteristic of scalar SFEs is that there are many potential extensions of any given $f$.  In this section, we provide examples of scalar SFEs, illustrating the capacity of the SFE framework for building knowledge about $f$ into the extension. See Appendix \ref{app: vector SFE examples} for all proofs and further discussion. 

\textbf{Lov\'asz extension.}
Re-indexing the coordinates of $\xb$ so that $x_1 \geq x_2 \ldots \geq x_n$, we define $p_\xb$ to be supported  on the sets $ S_1 \subseteq S_2 \subseteq \dots \subseteq S_n$ with $S_i=\{1,2,\dots,i \}$ for $i=1,2,\dots,n$. The coefficient are defined as $y_{S_i} = p_\xb( S_i) := x_i - x_{i+1}$ and $p_\xb( S)=0$ for all other sets. The resulting \emph{Lov\'asz extension}---known as the \emph{Choquet integral} in decision theory \citep{choquet1954theory,marichal2000axiomatic}---is a key tool in combinatorial optimization due to a seminal result: the Lov\'asz extension is convex if and only if $f$ is submodular \citep{lovasz1983submodular}, implying that submodular minimization can be solved in polynomial-time \citep{gls81}.

\textbf{Bounded cardinality Lov\'asz extension.}  A collection $\{S_i\}_{i=1}^n$ of subsets of $[n]$ can be encoded in an $n \times n$ matrix $\mathbf{S} \in \{0,1\}^{n \times n}$ whose $i$th column is  $\mathbf{1}_{S_i}$. In this notation, the dual LP constraint  $\sum_{S \subseteq [n]} y_S \mathbf{1}_S = \xb$ can be written as $\mathbf{S} \mathbf{p} = \xb$, where the $i$th coordinate of  $\mathbf{p}$ defines $p_\xb(S_i)$.
The \emph{bounded cardinality} extension generalizes  the Lov\'asz extension to focus only on sets of cardinality at most $k\leq n$. Again, re-index $\xb$ so that $x_1 \geq x_2 \ldots \geq x_n$. Use the first $k$ sets $S_1 \subseteq S_2 \subseteq \dots \subseteq S_k$, where $S_i=\{1,2,\dots,i \}$, to populate the first $k$ columns of matrix $\mathbf{S}$. 
We add further $n-k$ sets: $S_{k+i}=\{j+i \; | \; j \in S_k\}$ for $i=1,\dots,n-k$, to fill the rest of $\mathbf{S}$. 
Finally, $p_\xb (S_i)$ can be analytically calculated from $\mathbf{p}= \mathbf{S}^{-1}\mathbf{x}$, where  $\mathbf{S}$ is invertible since it is a Toeplitz banded upper triangular matrix.  

\textbf{Permutations and involutory extensions.} We use the same $\mathbf{S}, \mathbf{p}$ notation. Let $\mathbf{S}$ be an elementary permutation matrix. Then it is involutory, i.e., $\mathbf{S}\mathbf{S}=\mathbf{I}$,
  and we may easily determine $\mathbf{p}=\mathbf{S} \mathbf{x}$ given $\mathbf{S} $  and $\xb$. Note that $p_\xb(S_i)=\mathbf{p}_i$ must  be non-negative since $\xb$ and $\mathbf{S}$ are  non-negative entry-wise.  Finally, restricting $\xb$ to the $n$-dimensional Simplex guarantees that $\|\mathbf{p}\|_1 \leq 1$, which ensures $p_\xb$ is a probability distribution (any remaining mass is placed on the empty set). The extension property can be guaranteed on singleton sets as long as the chosen permutation admits a fixed point at the argmax of $\mathbf{x}$. Any elementary permutation matrix $\mathbf{S}$ with such a fixed point yields a valid SFE.  


\textbf{Singleton extension.} Consider a set function $f$ for which $f(S)=\infty$ unless $S$ has cardinality one. To ensure $\barf$ is finite valued, $p_\xb$ must be supported only on the sets $S_i=\{i\}$, $i=1,\ldots,n$. Assuming $\xb$ is sorted so that $x_1 \geq x_2 \ldots \geq x_n$, define $p_\xb( S_i) = x_i - x_{i+1}$. It is shown in Appendix \ref{app: vector SFE examples} that this defines a scalar SFE, except for the dual LP feasibility. However, when using $\barf$ as a loss function, minimization drives $\xb$ towards the minima $\min_\xb \barf(\xb)$ which \emph{are} dual feasible. So dual infeasibility is benign in this instance and we approach the feasible set from the outside.


\textbf{Multilinear extension.} The multilinear extension, widely used in combinatorial optimization \citep{caliChVon11}, is supported on all sets with coefficients $p_\xb(S)= \prod_{i\in S }x_i \prod_{i\notin S }(1-x_i)$, the product distribution. In general, evaluating the multilinear extension exactly requires $2^n$ calls to $f$, but for several interesting set functions, e.g., graph cut, set cover, and facility location, it can be computed efficiently in $\widetilde{\mathcal O}(n^2)$ time~\citep{iyer2014monotone}.


\section{Neural Set Function Extensions}\label{sec: neural SFEs}


This section builds on the scalar SFE framework---where each item $i$ in the ground set $[n]$ is represented by a single scalar---to develop extensions that use high-dimensional embeddings to avoid introducing low-dimensional bottlenecks into neural network architectures. The core motivation that lifting problems into higher dimensions can make them easier is not unique to deep learning. For instance, it also underlies kernel methods \citep{shawe2004kernel} and the \emph{lift-and-project} method for integer programming \citep{lovasz1991cones}.

Our method takes inspiration from prior successes of semi-definite programming for combinatorial optimization \citep{goemans1995improved} by extending onto $\calX = \mathbb{S}^{n}_+$, the set of $n \times n$ positive semi-definite (PSD) matrices. With this domain, each item is represented by a vector, not a scalar.  

\subsection{Lifting Set Function Extensions to Higher Dimensions}

We embed sets into $\mathbb{S}^{n}_+$ via the map $e(S)=\mathbf{1}_S \mathbf{1}_S^\top$.
To define extensions on this matrix domain, we translate the linear programming approach of Section \ref{sec: scalar SFEs} into an analogous SDP formulation: 
%
 \begin{align}
     \max_{\mathbf{Z} 	\succeq 0, b \in \mathbb{R}} \{\text{Tr}(\mathbf{X^\top Z}) + b\} \text{ subject to } \frac{1}{2}\text{Tr}((\mathbf{1}_S \mathbf{1}_T^\top + \mathbf{1}_T \mathbf{1}_S^\top) \mathbf{Z}) + b \leq f(S\cap T) \text{ for } S,T \subseteq [n], \tag{primal SDP}
 \end{align}
where we switch from lower case letters to upper case since we are now using matrices. 
Next, we show that this choice of primal SDP is a natural analog of the original LP that provides the right correspondences between vectors and matrices by proving that primal LP feasible solutions correspond to primal SDP feasible solutions with the same objective value (see Appendix \ref{app: primaldual} for a discussion on the SDP and its dual). To state the result, note that the embedding $e(S)=\mathbf{1}_S \mathbf{1}_S^\top$ is a particular case of the correspondence $\xb \in [0,1]^n \mapsto \sqrt{\xb} \sqrt{\xb}^\top$.

\begin{proposition}\label{prop:sdp_lp}
(Containment of LP in SDP) For any $\xb \in [0,1]^n$, 
define $\mathbf{X} = \sqrt{\xb} \sqrt{\xb}^\top$ with the square-root taken entry-wise.  Then, for any $(\zb,b) \in \mathbb{R}^n_+ \times \mathbb{R}$ that is primal LP feasible, the pair  $(\mathbf{Z} ,b)$ where $\mathbf{Z}=\text{diag}(\zb)$, is primal SDP feasible and the objective values agree:  $\text{Tr}(\mathbf{X^\top Z}) =\zb^\top \xb$.
\end{proposition}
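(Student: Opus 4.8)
The plan is to verify the two claims — feasibility of $(\mathbf{Z},b)$ for the primal SDP and equality of objective values — essentially by direct computation, exploiting the fact that $\mathbf{Z}=\mathrm{diag}(\zb)$ is diagonal and $\mathbf{X}=\sqrt{\xb}\sqrt{\xb}^\top$ is a rank-one matrix with a clean entrywise description. First I would record the two ingredients: PSD-ness of $\mathbf{Z}$, and the shape of the SDP constraint when $\mathbf{Z}$ is diagonal. Since $(\zb,b)$ is primal LP feasible, by definition $\zb \in \mathbb{R}^n_+$, so every diagonal entry of $\mathbf{Z}=\mathrm{diag}(\zb)$ is nonnegative, hence $\mathbf{Z}\succeq 0$; this is the only place the hypothesis $\zb \in \mathbb{R}^n_+$ is used, and it is exactly why the proposition restricts to nonnegative $\zb$ (a general LP-feasible $\zb$ need not give a PSD diagonal matrix).

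Next I would compute the left-hand side of the primal SDP constraint for arbitrary $S,T\subseteq[n]$. Using $\mathrm{Tr}(\mathbf{1}_S\mathbf{1}_T^\top \mathbf{Z}) = \mathbf{1}_T^\top \mathbf{Z}\,\mathbf{1}_S = \sum_{i} z_i (\mathbf{1}_S)_i (\mathbf{1}_T)_i = \sum_{i\in S\cap T} z_i$, and symmetrically for the transposed term, the symmetrized quantity $\tfrac12\mathrm{Tr}((\mathbf{1}_S\mathbf{1}_T^\top+\mathbf{1}_T\mathbf{1}_S^\top)\mathbf{Z})$ collapses to $\sum_{i\in S\cap T} z_i = \mathbf{1}_{S\cap T}^\top \zb$. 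Therefore the primal SDP constraint at $(S,T)$ reads $\mathbf{1}_{S\cap T}^\top \zb + b \le f(S\cap T)$, which is precisely the primal LP constraint at the set $S\cap T$. Since $(\zb,b)$ satisfies the primal LP constraint for \emph{all} subsets of $[n]$, in particular for $S\cap T$, the SDP constraint holds for every pair $(S,T)$. Combined with $\mathbf{Z}\succeq 0$, this shows $(\mathbf{Z},b)$ is primal SDP feasible.

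Finally, for the objective values: with $\mathbf{X}=\sqrt{\xb}\sqrt{\xb}^\top$ we have $X_{ij}=\sqrt{x_i}\sqrt{x_j}$, so $\mathbf{X}$ and $\mathbf{Z}=\mathrm{diag}(\zb)$ give $\mathrm{Tr}(\mathbf{X}^\top\mathbf{Z})=\sum_i X_{ii} z_i = \sum_i x_i z_i = \zb^\top\xb$, matching the LP objective (the additive $+b$ is identical on both sides). I do not anticipate a real obstacle here — the argument is a short computation. The only thing to be slightly careful about is the symmetrization in the SDP constraint: one should note that $\mathbf{1}_S\mathbf{1}_T^\top$ alone is not symmetric, so the trace pairing with $\mathbf{Z}$ must be written out via $\mathrm{Tr}(AB)=\sum_{i,j}A_{ij}B_{ji}$ (or equivalently $\mathbf{1}_T^\top \mathbf{Z}\mathbf{1}_S$), and then observe that both the symmetrized term and $\mathbf{Z}$ diagonal conspire so that only indices $i\in S\cap T$ survive. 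Everything else is bookkeeping, and the embedding check $e(S)=\mathbf{1}_S\mathbf{1}_S^\top = \sqrt{\mathbf{1}_S}\sqrt{\mathbf{1}_S}^\top$ is immediate since $\mathbf{1}_S$ is $\{0,1\}$-valued.
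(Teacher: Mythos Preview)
Your proposal is correct and follows essentially the same approach as the paper's proof: both verify the SDP constraint by reducing $\tfrac{1}{2}\mathrm{Tr}((\mathbf{1}_S\mathbf{1}_T^\top+\mathbf{1}_T\mathbf{1}_S^\top)\mathbf{Z})$ to $\mathbf{1}_{S\cap T}^\top\zb$ via the diagonality of $\mathbf{Z}$, then invoke the LP constraint at $S\cap T$, and both compute the objective entrywise using $X_{ii}=x_i$. Your version is slightly cleaner in that you explicitly note $\mathbf{Z}\succeq 0$ follows from $\zb\in\mathbb{R}^n_+$ (the paper leaves this implicit), and you correctly flag that nonnegativity of $\zb$ is an additional hypothesis beyond LP feasibility rather than part of it.
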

%
Proposition~\ref{prop:sdp_lp} establishes that the primal SDP feasible set is a \textit{spectrahedral lift} of the positive primal LP feasible set, i.e., feasible solutions of the primal LP lead to feasible solutions of the primal SDP.
As with scalar SFEs, to define neural SFEs we consider the dual SDP:
\begin{align}
    &  \min_{ \{y_{S,T} \geq 0 \}}\sum_{ S,T
     \subseteq [n]}  y_{S, T}  f(S\cap T) \ \text{ subject to } \  \mathbf{X}\preceq \sum_{S,T \subseteq [n]} \frac{1}{2}y_{S , T}(\mathbf{1}_{S} \mathbf{1}_{T}^\top+ \mathbf{1}_{T} \mathbf{1}_{S}^\top) \ \ \text{ and } \sum_{ S, T \subseteq [n]} y_{{S,T}} = 1 \tag{dual SDP}
\end{align}
We demonstrate that for suitable $\mathbf{X}$ this SDP has feasible solutions via an explicit construction in Section \ref{sec: constructing neural SFEs}. This leads us to define a neural SFE which, as with scalar SFEs, is given by a feasible solution to the dual SDP that satisfies the extension property whose coefficients are continuous in $\mathbf{X}$:
\begin{definition}[Neural SFE]
A neural set function extension of $f$  at a point $\mathbf{X} \in \mathbb{S}^{n}_+$ is defined as
    $$ \barf(\mathbf{X}) \triangleq \sum_{S,T \subseteq [n]} p_\mathbf{X}(S , T) f(S\cap T), $$
where $y_{S, T}=p_\mathbf{X}(S , T)$ is a feasible solution to the dual SDP and for all $S,T \subseteq [n]$: 1) $p_\mathbf{X}(S, T)$ is continuous at $\mathbf{X}$ and 2) it is valid, i.e., $\barf(\mathbf{1}_S\mathbf{1}_S^{\top})=f(S)$ for all $S \subseteq [n]$.
\end{definition}



\subsection{Constructing Neural Set Function Extensions}\label{sec: constructing neural SFEs}

We constructed a number of explicit examples of scalar SFEs   in Section \ref{sec: constructing scalar SFEs}. For neural SFEs we employ a different strategy. Instead of providing individual examples of neural SFEs, we develop a single recipe for converting \emph{any} scalar SFE into a corresponding neural SFE. Doing so allows us to build on the variety of scalar SFEs and provides an additional connection between scalar and neural SFEs. In Section \ref{sec: experiments} we show the empirical superiority of neural SFEs over their scalar counterparts. 

Our construction is given in the following proposition:

\begin{proposition}
Let $p_\xb$ induce a scalar SFE of $f$. For $\mathbf{X} \in \mathbb{S}_+^n$, consider a decomposition $\mathbf{X} = \sum_{i=1}^n \lambda_i \xb_i \xb_i^\top$ and fix
\begin{align}
  p_\mathbf{X}(S, T) = \sum_{i=1}^n \lambda_i \, p_{\xb_i}(S) p_{\xb_i}(T)   \text{ for all } S,T \subseteq [n]. 
\end{align}
Then, $ p_\mathbf{X}$ defines a neural SFE $\barf$ at $\mathbf{X}$.
\end{proposition}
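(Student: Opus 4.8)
The plan is to verify the two defining requirements of a neural SFE directly from the definition, reducing each of them term-by-term to the corresponding property of the scalar SFE $p_\xb$ over the given decomposition $\mathbf{X} = \sum_{i=1}^n \lambda_i \xb_i\xb_i^\top$ (taken with $\lambda_i \ge 0$, $\sum_i\lambda_i = 1$, and each $\xb_i$ in the domain on which $p_\xb$ is a scalar SFE, as supplied by the ambient setup). Concretely, I must check: (i) $y_{S,T} = p_\mathbf{X}(S,T)$ is feasible for the dual SDP; (ii) $\mathbf{X}\mapsto p_\mathbf{X}(S,T)$ is continuous at $\mathbf{X}$; (iii) $\barf(\mathbf{1}_S\mathbf{1}_S^\top) = f(S)$ for every $S\subseteq[n]$.

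\textbf{Dual SDP feasibility.} Nonnegativity of $p_\mathbf{X}(S,T) = \sum_i\lambda_i p_{\xb_i}(S) p_{\xb_i}(T)$ is immediate since $\lambda_i \ge 0$ and each $p_{\xb_i}$ is nonnegative (being dual-LP feasible). For the normalization, $\sum_{S,T} p_\mathbf{X}(S,T) = \sum_i\lambda_i\big(\sum_S p_{\xb_i}(S)\big)\big(\sum_T p_{\xb_i}(T)\big) = \sum_i\lambda_i = 1$, using $\sum_S p_{\xb_i}(S) = 1$ from the dual LP. The only substantive point is the PSD constraint: here I would use the dual-LP equality $\sum_S p_{\xb_i}(S)\mathbf{1}_S = \xb_i$ to get, for each $i$, $\sum_{S,T} p_{\xb_i}(S)p_{\xb_i}(T)\,\mathbf{1}_S\mathbf{1}_T^\top = \big(\sum_S p_{\xb_i}(S)\mathbf{1}_S\big)\big(\sum_T p_{\xb_i}(T)\mathbf{1}_T\big)^\top = \xb_i\xb_i^\top$; summing against $\lambda_i$ gives $\sum_{S,T} p_\mathbf{X}(S,T)\mathbf{1}_S\mathbf{1}_T^\top = \mathbf{X}$, and since $p_\mathbf{X}$ is symmetric in $(S,T)$ the symmetrized sum $\sum_{S,T}\tfrac12 p_\mathbf{X}(S,T)(\mathbf{1}_S\mathbf{1}_T^\top + \mathbf{1}_T\mathbf{1}_S^\top)$ equals $\mathbf{X}$ as well. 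Hence the dual SDP constraint $\mathbf{X}\preceq \sum_{S,T}\tfrac12 p_\mathbf{X}(S,T)(\mathbf{1}_S\mathbf{1}_T^\top + \mathbf{1}_T\mathbf{1}_S^\top)$ holds, with equality.

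\textbf{Validity.} At $\mathbf{X} = \mathbf{1}_S\mathbf{1}_S^\top$ the matrix has rank one, so the range of each rank-one term $\xb_i\xb_i^\top$ with $\lambda_i > 0$ lies in $\mathrm{span}\{\mathbf{1}_S\}$ (if $v\perp\mathbf{1}_S$ then $0 = \sum_j\lambda_j(v^\top\xb_j)^2 \ge \lambda_i(v^\top\xb_i)^2$). Thus $\xb_i = c_i\mathbf{1}_S$ with $c_i\in[0,1]$ (using $\xb_i\in[0,1]^n$ and, for $S\neq\emptyset$, that $\mathbf{1}_S$ has a coordinate equal to $1$), and matching $\mathbf{1}_S\mathbf{1}_S^\top$ forces $\sum_i\lambda_i c_i^2 = 1$; combined with $\sum_i\lambda_i = 1$ and $c_i^2\le c_i\le 1$ this forces $c_i = 1$ whenever $\lambda_i > 0$, so $p_\mathbf{X}(S',T') = p_{\mathbf{1}_S}(S')\,p_{\mathbf{1}_S}(T')$. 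Finally $p_{\mathbf{1}_S}$ is a dual-LP-feasible distribution with $\sum_{S'}p_{\mathbf{1}_S}(S')\mathbf{1}_{S'} = \mathbf{1}_S$; since $\mathbf{1}_S$ is a vertex of the cube $[0,1]^n$ and $S'\mapsto\mathbf{1}_{S'}$ is injective, $p_{\mathbf{1}_S}$ must be the point mass at $S$, whence $\barf(\mathbf{1}_S\mathbf{1}_S^\top) = f(S\cap S) = f(S)$ ($S=\emptyset$ is immediate).

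\textbf{Continuity, and the main obstacle.} Since each $p_{\xb}(S)$ is continuous in $\xb$ (a scalar SFE property), $\mathbf{X}\mapsto p_\mathbf{X}(S,T)$ is continuous as soon as the decomposition map $\mathbf{X}\mapsto\{(\lambda_i,\xb_i)\}_i$ can be chosen to depend continuously on $\mathbf{X}$. This is the one delicate step and, I expect, the real content beyond bookkeeping: the naive spectral decomposition is discontinuous at matrices with repeated eigenvalues, and because $p_\xb$ is \emph{nonlinear} the cluster sum $\sum_{i:\,\lambda_i=\lambda}\lambda_i p_{\xb_i}(S)p_{\xb_i}(T)$ depends on which orthonormal basis of the degenerate eigenspace is used, unlike $\sum_{i:\,\lambda_i=\lambda}\lambda_i\xb_i\xb_i^\top$, which is basis-independent. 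So the crux is to fix a decomposition rule whose induced coefficients vary continuously in $\mathbf{X}$ (or, failing full continuity, to observe that continuity off a measure-zero set is all that gradient-based training requires, exactly as in the scalar case); I would follow the appendix's construction here. Everything else above is a clean, purely algebraic reduction to the scalar SFE axioms.
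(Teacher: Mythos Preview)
Your proposal is correct and closely mirrors the paper's argument. Feasibility is proved identically: expand each $\xb_i\xb_i^\top$ via the scalar representation $\xb_i=\sum_S p_{\xb_i}(S)\mathbf{1}_S$ and sum. For validity, the paper specializes immediately to the eigendecomposition of $\mathbf{1}_S\mathbf{1}_S^\top$ (one eigenvector $\mathbf{1}_S$) and asserts $p_\mathbf{X}(S,S)=1$; your rank-one/vertex-of-the-cube argument works for \emph{any} decomposition with $\lambda_i\ge 0$, $\sum_i\lambda_i=1$, $\xb_i\in[0,1]^n$, and in fact supplies the step the paper leaves implicit (that dual-LP feasibility at a vertex forces $p_{\mathbf{1}_S}$ to be the point mass at $S$). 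For continuity you correctly isolate the only nontrivial issue; the paper resolves it exactly as you anticipate---by fixing the eigendecomposition as the rule and invoking a Davis--Kahan variant (Yu et al.), under the additional hypothesis that the eigenvalues of $\mathbf{X}$ are distinct, so full continuity is not claimed at degenerate spectra.
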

See Appendix \ref{app: neural SFE proofs}  for proof. The choice of decomposition will give rise to different extensions. Here, we instantiate our neural extensions using the eigendecomposition of $\mathbf{X}$. Since eigenvectors may not belong to $[0,1]^n$ we reparameterize by first applying a sigmoid function before computing the scalar extension distribution $p_\xb$. In practice we found that neural SFEs work just as well even without this sigmoid function---i.e., allowing scalar SFEs to be evaluated outside of $[0,1]^n$. The continuity of the neural SFE $\barf$ when using the eigendecomposition follows from a variant of the Davis--Kahan theorem \citep{yu2015useful}, which requires the additional assumption that the eigenvalues of $\mathbf{x}$ are distinct. For efficiency, in practice we do not use all $n$ eigenvectors, and use only the $k$ with largest eigenvalue.
This is justified by Figure \ref{fig: runtime ablation}, which shows that in practical applications $\mathbf{X}$ often has a rapidly decaying spectrum. 


Evaluating a neural SFE requires an accessible closed-form expression, the precise form of which depends on the underlying scalar SFE. Further, from the definition of Neural SFEs we see that if a scalar SFE is supported on sets with a property that is closed under intersection (e.g., bounded cardinality), then the supporting sets of the corresponding neural SFE will also inherit that property. This implies that the neural counterparts of the Lov\'asz, bounded cardinality Lov\'asz, and singleton/permutation extensions have the same support  as their scalar  counterparts.
An immediate corollary is that we can easily compute the neural counterpart of the Lov\'asz extension which has a simple closed form:

\begin{corollary}
 For $\mathbf{X} \in \mathbb{S}_+^n$ consider the eigendecomposition $\mathbf{X} = \sum_{i=1}^n \lambda_i \xb_i \xb_i^\top$. Let $p_{\xb_i}$ be as in the Lov\'asz extension: $p_{\xb_i}(S_{ij}) = \sigma(x_{i,j}) - \sigma(x_{i,j+1})$, where  $\sigma$ is the sigmoid function, and $\xb_i$ is sorted so $x_{i,1} \geq \ldots \geq x_{i,n}$ and $S_{ij} = \{1, \ldots , j\}$, with $p_{\xb_i}(S) = 0$ for all other sets. Then, the neural Lov\'asz extension is:
\begin{align}
  \barf (\mathbf{X}) = \sum_{i,j=1}^n \lambda_i p_{\xb_i}(S_{ij})  \cdot \bigg ( p_{\xb_i}(S_{ij}) +2 \sum_{\ell: \ell>j} p_{\xb_i}(S_{i\ell}) \bigg ) \cdot f(S_{ij}).
\end{align}
\end{corollary}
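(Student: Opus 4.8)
The plan is to specialize the preceding proposition to the eigendecomposition and the Lov\'asz choice of $p_{\xb_i}$, then expand $f(S\cap T)$ over the support. By the proposition, with $\mathbf{X} = \sum_{i=1}^n \lambda_i \xb_i \xb_i^\top$ the (reparameterized) Lov\'asz scalar SFE on each eigenvector gives a valid neural SFE with coefficients $p_\mathbf{X}(S,T) = \sum_{i=1}^n \lambda_i\, p_{\xb_i}(S) p_{\xb_i}(T)$, so that
\begin{align*}
  \barf(\mathbf{X}) = \sum_{S,T\subseteq[n]} p_\mathbf{X}(S,T)\, f(S\cap T) = \sum_{i=1}^n \lambda_i \sum_{S,T\subseteq[n]} p_{\xb_i}(S) p_{\xb_i}(T)\, f(S\cap T).
\end{align*}
First I would fix $i$ and note that $p_{\xb_i}$ is supported only on the nested chain $S_{i1}\subseteq S_{i2}\subseteq\cdots\subseteq S_{in}$ (after sorting the entries of $\xb_i$), so the inner double sum collapses to a sum over pairs $(j,\ell)$ with $S = S_{ij}$, $T = S_{i\ell}$.

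The key observation is that the chain is totally ordered by inclusion, hence $S_{ij}\cap S_{i\ell} = S_{i\min(j,\ell)}$. So the inner sum becomes $\sum_{j,\ell} p_{\xb_i}(S_{ij}) p_{\xb_i}(S_{i\ell}) f(S_{i\min(j,\ell)})$. Next I would reorganize this by the value of $m = \min(j,\ell)$: the pairs with $\min(j,\ell) = j$ are those with $\ell \geq j$, and symmetrically; being careful about the diagonal $j=\ell$ to avoid double-counting, one gets
\begin{align*}
  \sum_{j,\ell} p_{\xb_i}(S_{ij}) p_{\xb_i}(S_{i\ell}) f(S_{i\min(j,\ell)}) = \sum_{j} p_{\xb_i}(S_{ij}) f(S_{ij}) \Big( p_{\xb_i}(S_{ij}) + 2\!\!\sum_{\ell:\ell>j}\!\! p_{\xb_i}(S_{i\ell}) \Big),
\end{align*}
where the $p_{\xb_i}(S_{ij})^2$ term is the diagonal contribution and the factor $2$ accounts for the two symmetric off-diagonal orderings $(j,\ell)$ and $(\ell,j)$ both contributing $f(S_{ij})$ when $\ell > j$. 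Substituting $p_{\xb_i}(S_{ij}) = \sigma(x_{i,j}) - \sigma(x_{i,j+1})$ and summing over $i$ with weights $\lambda_i$ yields exactly the claimed formula.

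The main thing to get right is the bookkeeping in the reindexing-by-$\min$ step — in particular that the coefficient of $f(S_{ij})$ picks up a $1$ from the square term $(j,\ell)=(j,j)$ and a $2$ from each $\ell>j$, and nothing from $\ell<j$ (those contribute to $f(S_{i\ell})$ instead). I would double-check this by verifying the total mass: $\sum_j p_{\xb_i}(S_{ij})\big(p_{\xb_i}(S_{ij}) + 2\sum_{\ell>j}p_{\xb_i}(S_{i\ell})\big) = \big(\sum_j p_{\xb_i}(S_{ij})\big)^2$, which should equal $1$ (or the appropriate normalization) since $p_{\xb_i}$ telescopes, confirming consistency with the $\sum y_{S,T}=1$ constraint up to the empty-set remainder. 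Everything else is a direct substitution, so there is no real obstacle beyond this combinatorial rearrangement; validity and continuity are already supplied by the cited proposition and the Davis--Kahan argument mentioned in the text.
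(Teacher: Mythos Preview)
Your proposal is correct and is precisely the natural derivation; the paper states this result as an ``immediate corollary'' of the preceding proposition without writing out a proof, and your expansion via $S_{ij}\cap S_{i\ell}=S_{i\min(j,\ell)}$ followed by regrouping the double sum according to $\min(j,\ell)$ is exactly the intended argument.
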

\paragraph{Complexity and obtaining sets as solutions.}
In general, the neural SFE relies on all pairwise intersections $S \cap T$ of the scalar SFE sets, requiring $O(m^2)$ evaluations of $f$  when the scalar SFE is supported on $m$ sets. However, when the scalar SFE is supported on a family of sets that is closed under intersection---e.g., the Lov\'asz and singleton extensions---the corresponding neural SFE requires only $O(m)$ function evaluations. 
Discrete solutions can be obtained efficiently by returning the best set out of all scalar SFEs $p_{\xb_i}$.

\section{Experiments}\label{sec: experiments}
We experiment with SFEs as loss functions in neural network pipelines on discrete objectives arising in combinatorial and vision tasks. For combinatorial optimization, SFEs network training with a continuous version of the objective without supervision. For supervised image classification, they allow us to directly relax the training error instead of optimizing a proxy like cross entropy.

\begin{table}
\centering
\resizebox{0.99\textwidth}{!}{
\begin{tabular}{lccc cc} \bottomrule 

\toprule
                 & \multicolumn{5}{c}{\hspace{-12mm}\textbf{Maximum Clique}}                                                                                                                                         \\ 
                 & ENZYMES                                                    & PROTEINS                & IMDB-Binary          & MUTAG                                 & COLLAB                           \\  \midrule
\midrule

Straight-through \citep{bengio2013estimating} &   $0.725_{\pm0.268}$                                        & $0.722_{\pm 0.26}
$    &  $0.917_{\pm 0.253}$ & $0.965_{\pm 0.162}$    &   $0.856_{\pm 0.221}$   \\
Erd\H{o}s \citep{karalias2020erdos}        & $0.883_{\pm 0.156}$                                      & $0.905_{\pm 0.133}$   & $0.936_{\pm 0.175}$ & $1.000_{\pm 0.000}$    & $0.852
_{\pm 0.212}$     \\
REINFORCE  \citep{williams1992simple}      & $0.751_{\pm 0.301}$                                      & $0.725_{\pm 0.285}$    & $0.881_{\pm 0.240}$ & $1.000_{\pm 0.000}$    & $0.781_{\pm 0.316}$     \\

 \midrule
 Lov\'{a}sz scalar SFE            & \multicolumn{1}{l}{$0.723_{\pm 0.272}$} & $0.778_{\pm 0.270}$ & $0.975_{\pm 0.125}$  & \multicolumn{1}{l}{$0.977_{\pm0.125}$ } & $0.855_{\pm0.225}$ \\
Lov\'{a}sz neural SFE            & \multicolumn{1}{l}{$0.933_{\pm 0.148}$} & $0.926_{\pm0.165}$ & $0.961_{\pm 0.143}$   & \multicolumn{1}{l}{$1.000_{\pm 0.000}$} & $0.864_{\pm 0.205}$  \\
\bottomrule

 \toprule
                 & \multicolumn{5}{c}{\hspace{-12mm}\textbf{Maximum Independent Set}}                                                                                          \\ 
                 & ENZYMES                                                    & PROTEINS                & IMDB-Binary          & MUTAG                                 & COLLAB            \\ 
                  \midrule
\midrule
Straight-through \citep{bengio2013estimating}&   $0.505_{\pm 0.244}$                                     & $0.430_{\pm 0.252}$    & $0.701_{\pm 0.252}$ & $0.721_{\pm 0.257}$    & $0.331_{\pm 0.260}$      \\
Erd\H{o}s \citep{karalias2020erdos}        & $0.821_{\pm 0.124}$                                       & $0.903_{\pm 0.114}$    & $0.515
_{\pm 0.310}$ & $0.939_{\pm 0.069}$    & $0.886_{\pm 0.198}$       \\
REINFORCE \citep{williams1992simple}    & $0.617_{\pm 0.214}$                                      & $0.579_{\pm 0.340}$    & $0.899_{\pm 0.275}$ & $0.744_{\pm 0.121}$ &   $0.053_{\pm 0.164}$     \\
 \midrule
Lov\'{a}sz scalar SFE            & $0.311_{\pm 0.289}$ & $0.462_{\pm 0.260}$ & $0.716_{\pm 0.269}$   & $0.737_{\pm 0.154}$ & $0.302_{\pm 0.238}$  \\
Lov\'{a}sz neural SFE            & $0.775_{\pm 0.155}$ & $0.729_{\pm 0.205}$ & $0.679_{\pm 0.287}$   & $0.854_{\pm 0.132}$ & $0.392_{\pm 0.253}$ \\
\bottomrule
\end{tabular}} 
\begin{center}
\caption{\textbf{Unsupervised neural combinatorial optimization}: Approximation ratios for combinatorial problems. Values closer to 1 are better ($\uparrow$). Neural SFEs are competitive with other methods, and consistently improve over vector SFEs. 
} \label{table:combinatorial}
\end{center}
\vspace{-20pt}
\end{table}
%

\subsection{Unsupervised Neural Combinatorial Optimization}

We begin by evaluating the suitability of neural SFEs for unsupervised learning of neural solvers for combinatorial optimization problems on graphs. We use the ENZYMES, PROTEINS, IMDB, MUTAG, and COLLAB datasets from the TUDatasets benchmark \citep{morris2020tudataset}, using a 60/30/10 split for train/test/val. We test on two problems: finding maximum cliques, and maximum independent sets. 
We compare with three neural network based methods. We compare to two common approaches for backpropogating through discrete functions: the REINFORCE algorithm \citep{williams1992simple}, and the Straight-Through estimator \citep{bengio2013estimating}. 
The third is the recently proposed probabilistic penalty relaxation \citep{karalias2020erdos} for combinatorial optimization objectives. All methods use the same GNN backbone, comprising a single GAT layer \citep{velivckovic2018graph} followed by  multiple gated graph convolution layers \cite{li2015gated}. 

In all cases, given an input graph $G=(V,E)$ with $|V|=n$ nodes,  a GNN produces an embedding for each node: $\mathbf{X} \in \mathbb{R}^{n \times d}$. For scalar SFEs $d=1$, while for neural SFEs we consider $\mathbf{X}\mathbf{X}^\top$ in order to produce an $n \times n$ PSD matrix, which is passed as input to the SFE  $\barf$. The  set function $f$  used is problem dependent, which we discuss below.  Finally, see Appendix \ref{app: unsup comb} for training and hyper-parameter optimization details, and Appendix \ref{app: general expt setup} for details on data, hardware, and software.



\textbf{Maximum Clique.}
A set $S \subseteq V$ is a clique of $G=(V,E)$ if $(i,j) \in E$ for all $i,j \in S$. The MaxClique problem is to find the largest set $S$ that is a clique: i.e., $f(S)=|S|\cdot \mathbf{1}\{ S \text{ a clique} \}$.

\textbf{Maximum Independent Set (MIS).}
A set $S \subseteq V$ is an independent set of $G=(V,E)$ if $(i,j) \notin E$ for all $i,j \in S$. The goal is to find the largest $S$ in the graph that is independent, i.e., $f(S)=|S|\cdot \mathbf{1}\{ S \text{ an ind. set} \}$.
MIS differs significantly from MaxClique due to its high heterophily. 

\textbf{Results.} 
Table \ref{table:combinatorial} displays the mean and standard deviation of the approximation ratio $f(S)/f(S^*)$ of the solver solution $S$ and an optimal $S^*$ on the test set graphs. The neural Lova\'{s}z extension outperforms its scalar counterpart in 8 out of 10 cases, often by significant margins, for instance improving a score of $0.778$ on PROTEINS MaxClique to $0.926$. The neural SFE proved effective at boosting poor scalar SFE performance, e.g., $0.311$ on ENZYMES MIS, to the competitive performance of $0.775$. Neural Lova\'{s}z outperformed or equalled  and straight-through in 9 out of 10 cases, and the method of \cite{karalias2020erdos} in 6 out of 10.





\subsection{Constraint Satisfaction Problems}\label{sec: constraint}

 \begin{figure*}[t] 
  \begin{center}
    \includegraphics[width=\textwidth]{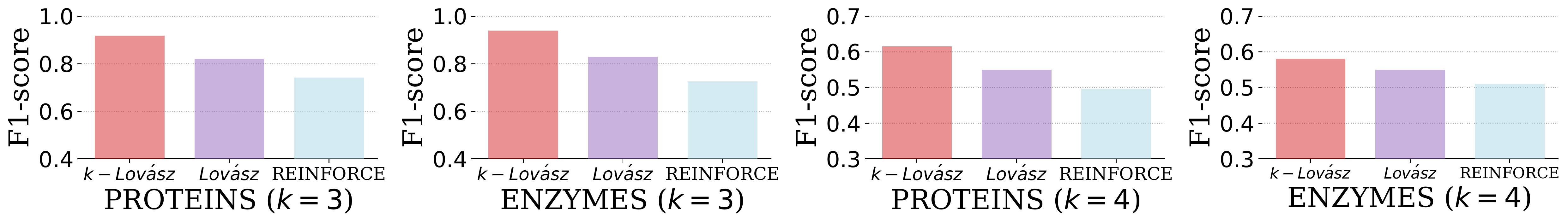}
      \vspace{-10pt}
    \caption{\textbf{$k$-clique constraint satisfaction:} higher F1-score is better. The $k$-bounded cardinality Lovasz extension is better aligned with the task  and significantly improves over the Lov\'{a}sz extension.}
    \label{fig: k clique}
  \end{center}
  \vspace{-10pt}
\end{figure*}

 \begin{figure*}[t] 
  \begin{center}
      \includegraphics[width=0.245\textwidth]{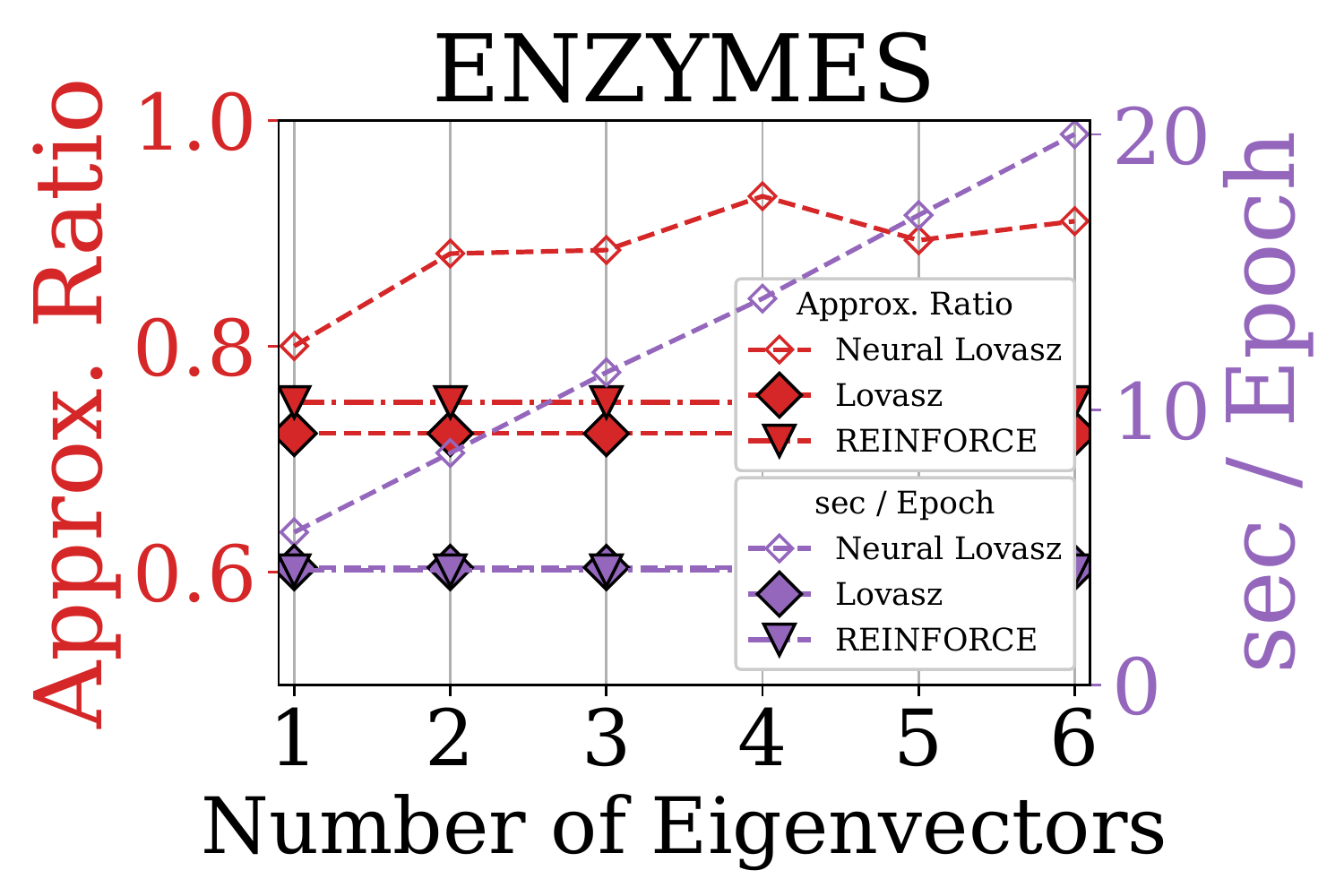}    \includegraphics[width=0.245\textwidth]{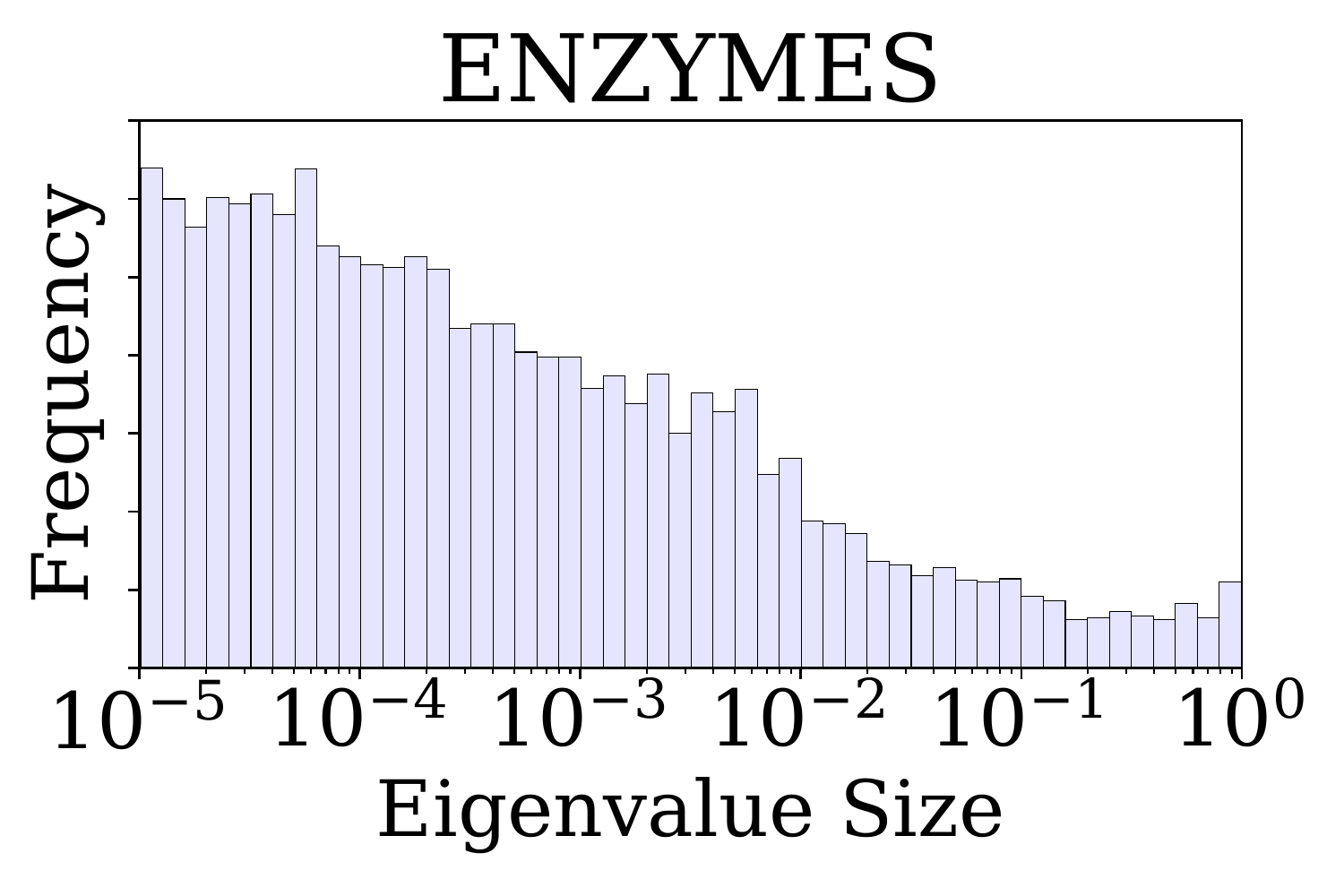}   
    \includegraphics[width=0.245\textwidth]{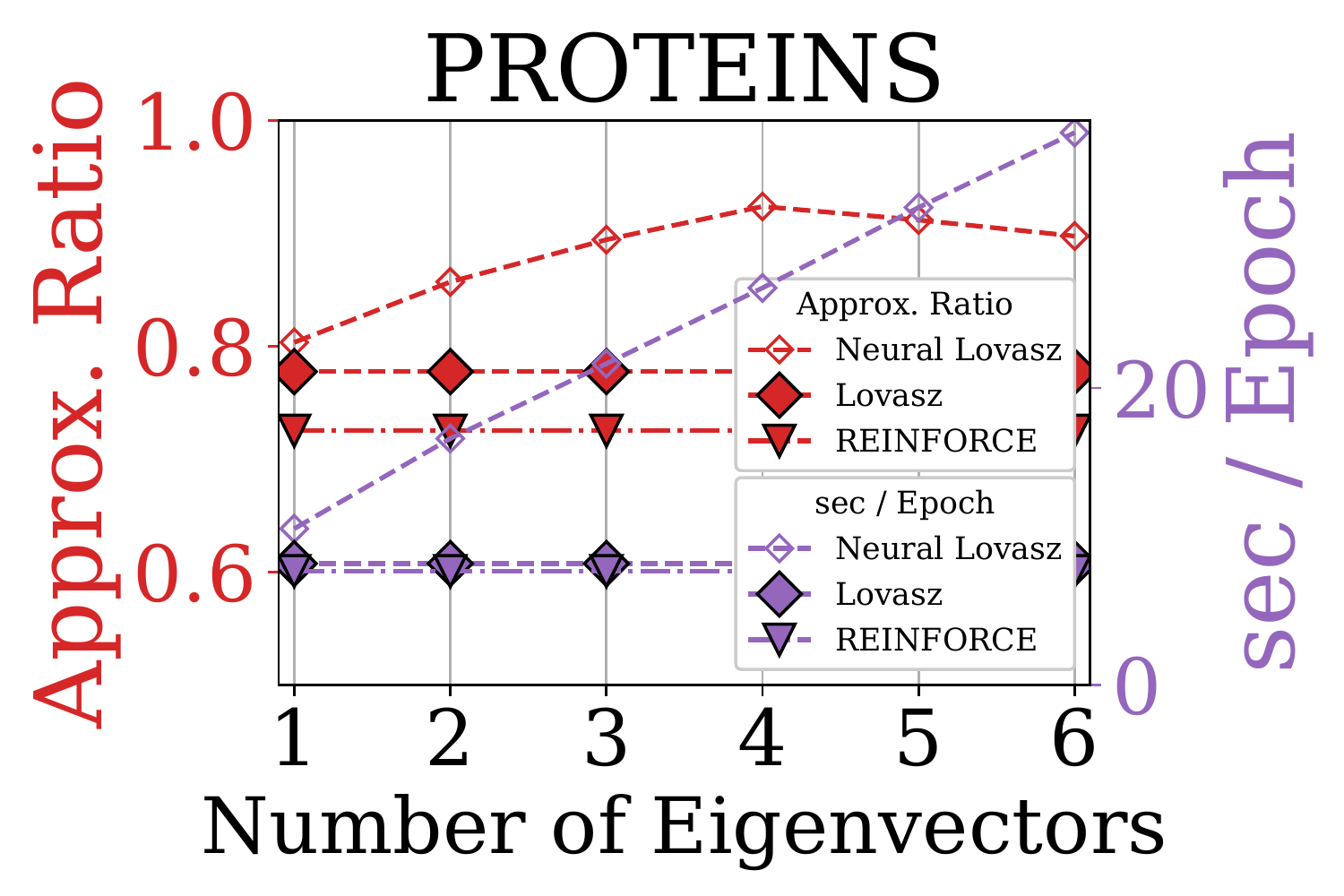}
    \includegraphics[width=0.245\textwidth]{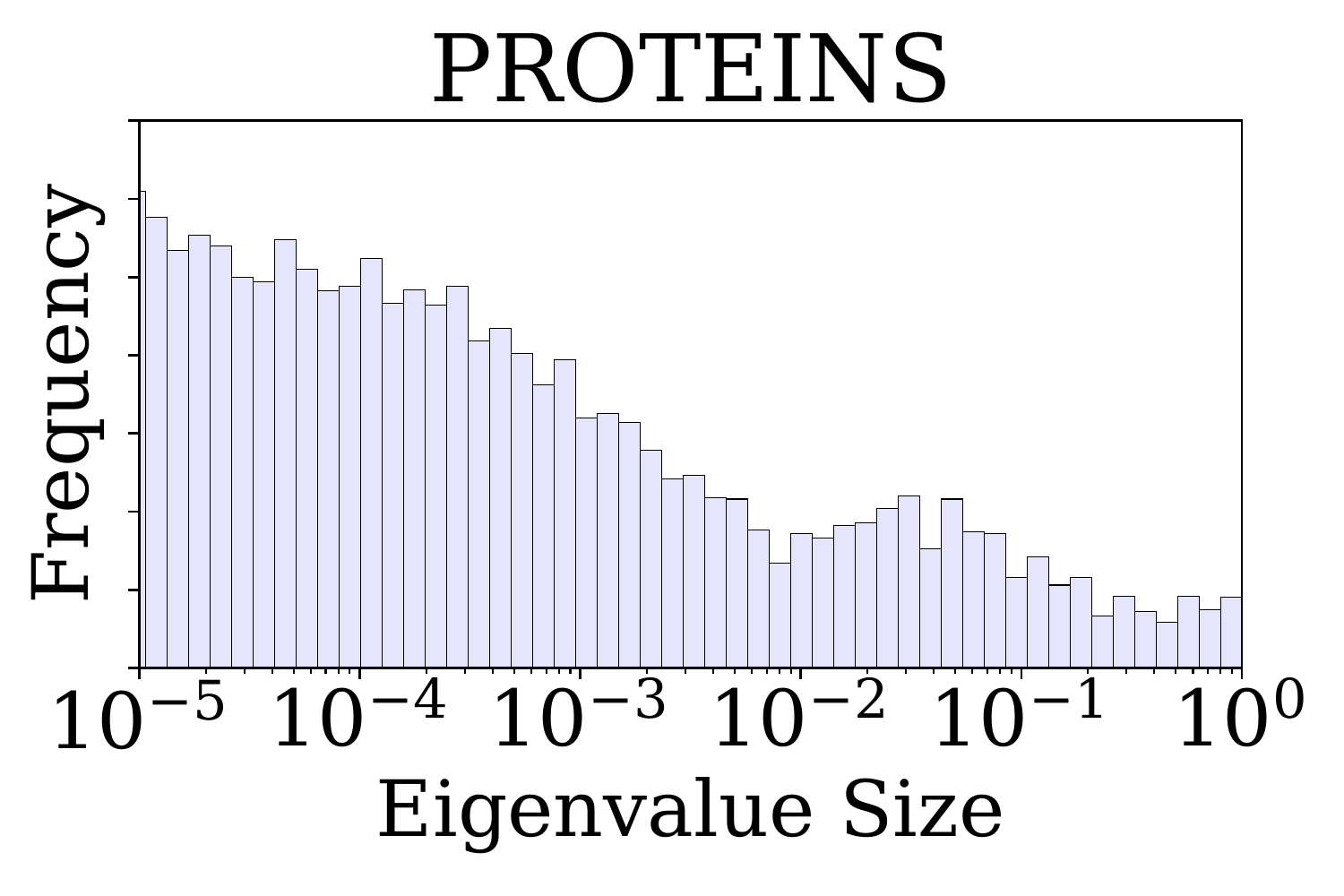}
      \vspace{-10pt}
    \caption{\textbf{Left:} Runtime and performance of neural SFEs on MaxClique using different numbers of eigenvectors. \textbf{Right:} Histogram of spectrum of matrix $\mathbf{X}$, outputted by a GNN trained on MaxClique.} 
    \label{fig: runtime ablation}
  \end{center}
  \vspace{-10pt}
\end{figure*}

Constraint satisfaction problems ask if there exists a set satisfying a given set of conditions \citep{kumar1992algorithms,cappart}.
In this section, we apply SFEs to the $k$-clique problem: given a graph, determine if it contains a clique of size $k$ or more. We test on the  ENZYMES and PROTEINS  datasets. Since satisfiability is  a binary classification problem we evaluate  using F1 score. 


\textbf{Results.} Figure \ref{fig: k clique} shows that by specifically searching over sets of size $k$ using the cardinality constrained Lov\'{a}sz extension from Section \ref{sec: constructing scalar SFEs}, we  significantly improve performance compared to the Lov\'{a}sz extension, and REINFORCE. This illustrates the value of SFEs in allowing task-dependent considerations (in this case a cardinality constraint) to be built into extension design. 

\begin{wrapfigure}{r}{0.45\textwidth}
  \begin{center}
    \vspace{-30pt}
    \includegraphics[width=0.44\textwidth]{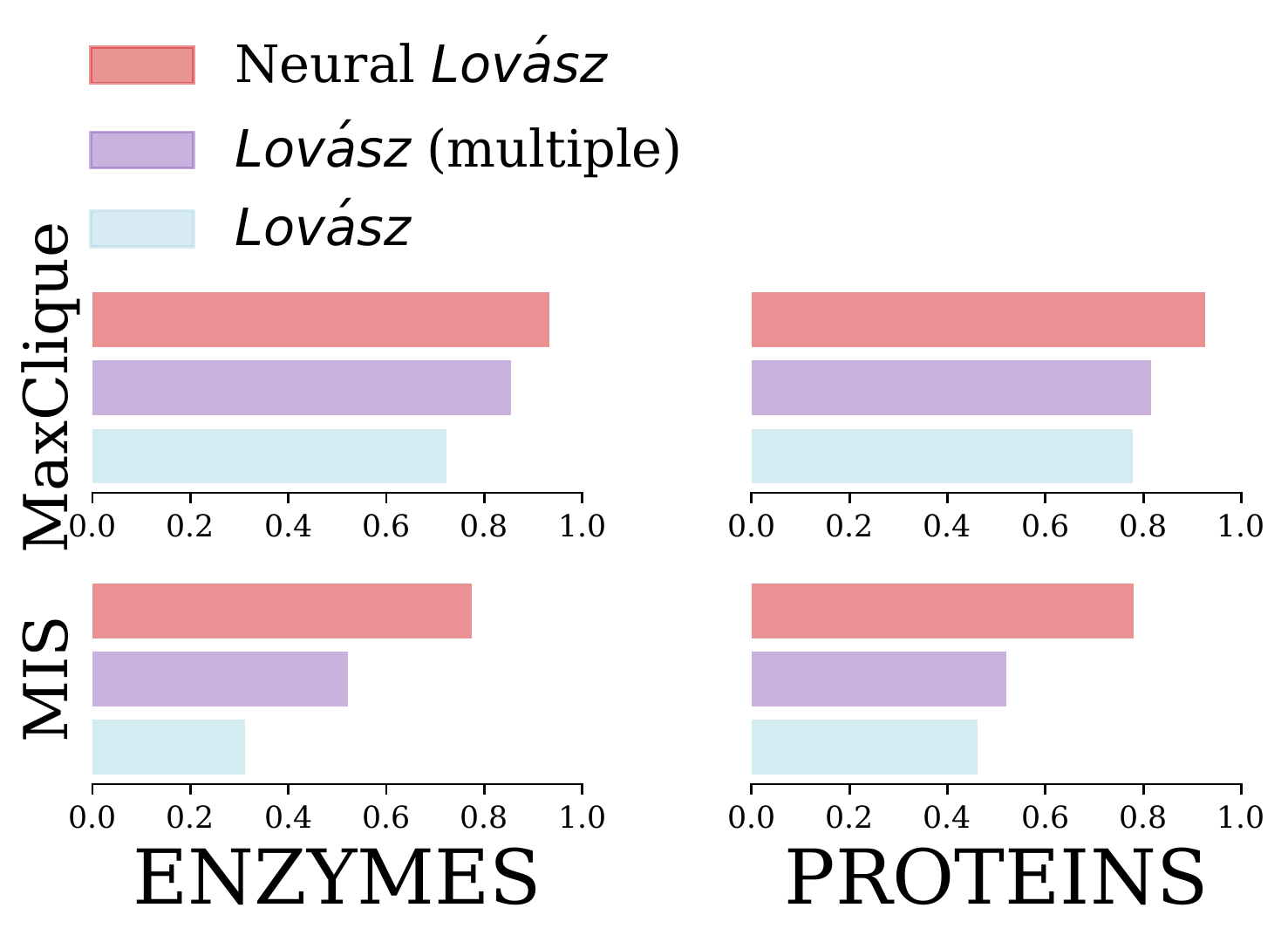}
    \caption{Neural SFEs outperform a naive alternative  high-dimensional extension.}
    \label{fig: lovasaz multi ablation}
    \vspace{-5pt}
  \end{center}
\end{wrapfigure} 

\subsection{Training Error as a Classification Objective} 
During training the performance of a classifier $h$ is typically assessed using the training error $\frac{1}{n}\sum_{i=1}^n \mathbf{1}\{ y_i \neq h(x_i)\}$. Since training error itself is non-differentiable, it is standard to train $h$ to optimize a differentiable surrogate such as the cross-entropy loss. Here we offer an alternative training method by continuously extending the non-differentiable mapping $\hat{y} \mapsto \mathbf{1}\{ y_i \neq \hat{y} \}$.  This map is a set function defined on single item sets, so we use the singleton extension  (definition in Section \ref{sec: constructing scalar SFEs}). Our goal is 
to demonstrate that the resulting differentiable loss function closely tracks the training error, and can be used to minimize it. We do not focus on test time generalization. 
 Figure \ref{fig: error v loss} shows the results. The singleton extension loss (left plot) closely tracks the true training  error at the same numerical scale, unlike other common loss functions (see Appendix \ref{app: image classification} for setup details). While we leave further consideration to future work, training error extensions may be useful for model calibration \citep{kennedy2001bayesian} and uncertainty estimation \citep{abdar2021review}.

 \begin{figure*}[t] 
  \begin{center}
    \includegraphics[width=\textwidth]{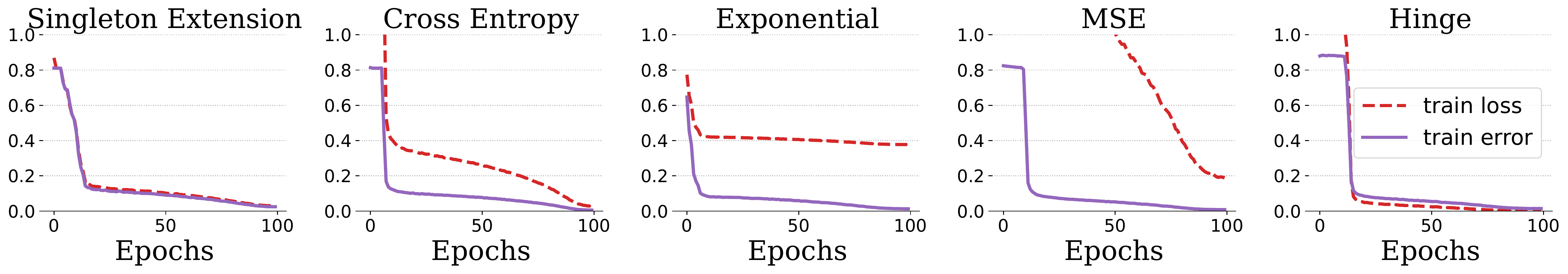}
    \includegraphics[width=\textwidth]{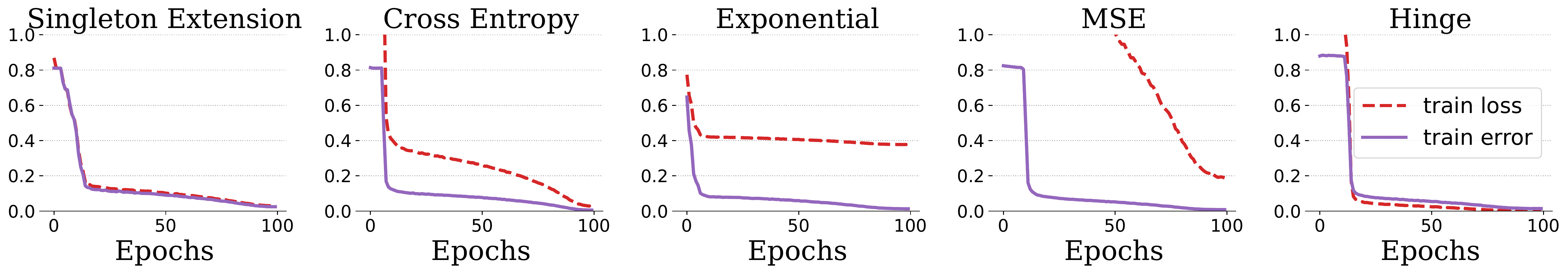}
      \vspace{-10pt}
    \caption{Top: CIFAR10. Bottom: SVHN. The singleton extension loss (left) is the only loss that approximates the true non-differentiable training error at the same numerical scale.}
    \label{fig: error v loss}
  \end{center}
  \vspace{-5pt}
\end{figure*}


 \subsection{Ablations}
 
\textbf{Number of Eigenvectors.} Figure \ref{fig: runtime ablation} compares the runtime and performance of neural SFEs using only the top-$k$ eigenvectors from the eigendecomposition $\mathbf{X} = \sum_{i=1}^n \lambda_i \xb_i \xb_i^\top$ with $k \in \{1,2,3,4,5,6\}$ on the maximum clique problem. For both ENZYMES and PROTEINS, performance increases with $k$---easily outperforming scalar SFEs and REINFORCE---until saturation around $k=4$, while runtime grows linearly with $k$. Histograms of the eigenvalues produced by trained networks show a  rapid  decay in the spectrum,  suggesting that the smaller eigenvalues have little effect on $\barf$.

\textbf{Comparison to Naive High-Dimensional Extension.} We compare neural SFEs to a naive high-dimensional alternative which, given an $n \times d$ matrix $\mathbf{X}$ simply computes a scalar SFE on each column independently and sums them up. This naive function design is not an extension, and the dependence on the $d$ dimensions is linearly separable, in contrast to the complex non-linear interactions between columns of $\mathbf{X}$ in neural SFEs. Figure \ref{fig: lovasaz multi ablation} shows that this naive extension, whilst improving over one-dimensional extensions, performs considerably worse than neural SFEs.
\section{Related Work} 
\textbf{Neural combinatorial optimization}
Our experimental setup largely follows recent work on unsupervised neural combinatorial optimization \citep{karalias2020erdos, schuetz2022combinatorial, xu2020tilingnn, toenshoff2021graph, amizadeh2018learning}, where continuous relaxations of discrete objectives are utilized. In that context, it is important to take into account the key conceptual and methodological differences of our approach. For instance, in the unsupervised \emph{Erd\H{o}s goes neural} (EGN) framework from \cite{karalias2020erdos}, the probabilistic relaxation and the proposed choice of distribution can be viewed as instantiating a multilinear extension. As explained earlier, this extension is costly in the general case (since $f$ must be evaluated $2^n$ times, and summed) but can be computed efficiently in closed form in certain cases. On the other hand, our extension framework offers multiple options for efficiently computable extensions without imposing any further conditions on the set function. For example, one could efficiently (linear time in $n$) compute the scalar and neural Lov\'asz extensions of any set function with only black-box access to the function. This renders our framework more broadly applicable. Furthermore, EGN incorporates the problem constraints additively in the loss function. In contrast to that, our extension framework does not require any commitment to a specific formulation in order to obtain a differentiable loss. This provides more flexibility in modelling the problem, as we can combine the cost function and the constraints in various other ways (e.g., multiplicatively). 
For general background on neural combinatorial optimization, we refer the reader to the surveys  \citep{bengio2021machine, cappart2021combinatorial, mazyavkina2021reinforcement}.

\textbf{Lifting to high-dimensional spaces.} Neural SFEs are heavily inspired by the Goemans-Williamson \citep{goemans1995improved} algorithm and other SDP techniques \citep{iguchi2015tightness}, which lift problems onto higher dimensional spaces, solve them, and then project back down.
Our approach to lifting set functions to high dimensions is motivated by the algorithmic alignment principle \citep{xu2019can}: neural networks whose computations emulate classical algorithms often generalize better with improved sample complexity \citep{yan2020neural, li2020strong, xu2019can}. Emulating algorithmic and logical operations is the focus of  Neural Algorithmic Reasoning \citep{velivckovic2019neural, dudzik2022graph, deacplanners2021} and work on knowledge graphs \citep{hamilton2018embedding, ren2019query2box, arakelyan2020complex}, which also emphasize operating in higher dimensions.  

\textbf{Extensions.} Scalar SFEs use an LP formulation of the convex closure \cite[Def. 20]{el2018learning}, a classical approach for defining convex extensions of discrete functions \cite[Eq. 3.57]{murota1998discrete}. See \cite{bach2019submodular} for a study of extensions of submodular functions. The constraints of our dual LP arise in contexts from global optimization \citep{tawarmalani2002convex} to barycentric approximation and interpolation schemes in computer graphics \citep{guessab2013generalized,hormann2014barycentric}. 
Convex extensions have also been used for combinatorial penalties
with structured sparsity \citep{obozinski2012convex,obozinski2016unified}, and general minimization algorithms for set functions \citep{el2020optimal}.

\textbf{Stochastic gradient estimation.} SFEs produce gradients for $f$ requiring only black-box access. There is a wide literature on sampling-based approaches to gradient estimation, for instance the REINFORCE algorithm \citep{williams1992simple} (i.e., score function estimator). However, sampling introduces noise which can cause unstable training and convergence issues, prompting significant study of variance reducing control variates  \citep{gu2017q, liu2018action, grathwohl2018backpropagation, wu2018variance, cheng2020trajectory}. SFEs can avoid sampling (and noise) all-together, as our extensions are differentiable and can be computed deterministically. 
A closely related, yet distinct, task is to produce gradients through sampling operations, which introduce non-differentiable nodes in neural network computation graphs. The Straight-Through Estimator \citep{bengio2013estimating}, arguably the simplest solution, treats sampling as the identity map in the backward pass, yielding biased gradient estimates. The Gumbel-Softmax trick \citep{maddison2017concrete,jang2016},  provides an alternative method to sample from categorical distributions (also benefiting from variance reduction \citep{paulus2020rao}). The trick can be seen through the lens of the more general Perturb-and-MAP framework that treats sampling as a perturbed optimization program. This framework has since been used to generalize the trick to more complex distributions \citep{paulus2020gradient} and to differentiate through the parameters of exponential families for learning and combinatorial tasks \citep{niepertimplicit}. Broadly, these techniques relax a discrete distribution into a continuous one by utilizing a noise distribution and \emph{assuming access} to a continuous loss function. SFEs are complementary to this setup, addressing the problem of designing continuous extensions.

\textbf{Differentiating through convex programs and algorithms.} Recent years have seen a  surge of interest in combining neural networks with solvers (e.g., LP solvers) and/or algorithms  in differentiable end to end pipelines \citep{agrawal2019differentiable,amos2017optnet,paulus21, vlastelica2019differentiation,wang2019satnet}. Whilst sharing the algorithmic alignment motivation of SFEs, the convex programming connection is mostly cosmetic: these works directly embed solvers into network architectures, while SFEs use convex programs as an analytical tool, without requiring solver access.


\vspace{-5pt}
\section{Conclusion }
We introduced Neural Set Function Extensions, a framework that enables evaluating set functions on continuous and high dimensional representations. We showed how to construct such extensions and demonstrated their viability in a range of tasks including combinatorial optimization and image classification. Notably, neural extensions deliver good results and improve over their scalar counterparts, further affirming the benefits of problem-solving in high dimensions.
\vspace{-5pt}

\section{Acknowledgements}
NK would like to thank Marwa El Halabi, Mario Sanchez, Mehmet Fatih Sahin, and Volkan Cevher for the feedback and fruitful discussions.
NK and AL would like to thank the Swiss National Science Foundation for supporting this work in the context of the project “Deep Learning for Graph-Structured Data” (grant number PZ00P2179981). SJ and JR acknowledge support from NSF CAREER award 1553284, NSF award 1717610, and the NSF AI Institute TILOS.




\newpage

 \bibliography{bibliography}
 \bibliographystyle{bibliography}

\section*{Checklist}

\begin{enumerate}

\item For all authors...
\begin{enumerate}
  \item Do the main claims made in the abstract and introduction accurately reflect the paper's contributions and scope?
    \answerYes{All claims made are backed. up either empirically or theoretically.}
  \item Did you describe the limitations of your work?
    \answerYes{See Appendix \ref{sec: limitations}.}
  \item Did you discuss any potential negative societal impacts of your work?
    \answerYes{See Appendix \ref{sec: broader impact}.}
  \item Have you read the ethics review guidelines and ensured that your paper conforms to them?
    \answerYes{We have read the guidelines, and confirmed that our paper conforms.}
\end{enumerate}

\item If you are including theoretical results...
\begin{enumerate}
  \item Did you state the full set of assumptions of all theoretical results?
    \answerYes{All theoretical result are stated exactly. }
        \item Did you include complete proofs of all theoretical results?
    \answerYes{See Appendix for proofs.}
\end{enumerate}

\item If you ran experiments...
\begin{enumerate}
  \item Did you include the code, data, and instructions needed to reproduce the main experimental results (either in the supplemental material or as a URL)?
    \answerYes{See anonymized URL for all code.}
  \item Did you specify all the training details (e.g., data splits, hyperparameters, how they were chosen)?
    \answerYes{See Appendix \ref{app: unsup comb} and Appendix \ref{app: image classification}}
        \item Did you report error bars (e.g., with respect to the random seed after running experiments multiple times)?
    \answerYes{Except in cases where HPO was run. In these cases we report the test performance of the. model with best validation performance. }
        \item Did you include the total amount of compute and the type of resources used (e.g., type of GPUs, internal cluster, or cloud provider)?
    \answerYes{See Appendix \ref{app: general expt setup}.}
\end{enumerate}

\item If you are using existing assets (e.g., code, data, models) or curating/releasing new assets...
\begin{enumerate}
  \item If your work uses existing assets, did you cite the creators?
    \answerYes{We cite all creators of. existing assets, either in the main paper or appendix.}
  \item Did you mention the license of the assets?
    \answerYes{Yes, see Appendix \ref{app: general expt setup}.}
  \item Did you include any new assets either in the supplemental material or as a URL?
    \answerYes{We provide anonymized code. Open source code will be released after review.}
  \item Did you discuss whether and how consent was obtained from people whose data you're using/curating?
    \answerYes{See Appendix \ref{app: general expt setup}.}
  \item Did you discuss whether the data you are using/curating contains personally identifiable information or offensive content?
    \answerYes{See Appendix \ref{app: general expt setup}.}
\end{enumerate}

\item If you used crowdsourcing or conducted research with human subjects...
\begin{enumerate}
  \item Did you include the full text of instructions given to participants and screenshots, if applicable?
\answerNA{No crowdsourcing or human subjects used.}
  \item Did you describe any potential participant risks, with links to Institutional Review Board (IRB) approvals, if applicable?
\answerNA{No crowdsourcing or human subjects used.}
  \item Did you include the estimated hourly wage paid to participants and the total amount spent on participant compensation?
\answerNA{No crowdsourcing or human subjects used.}
\end{enumerate}

\end{enumerate}

\appendix


\newpage

\section{Optimization programs: extended discussion}\label{app: primaldual}
In this section, we provide an extended discussion of the key components of our LP and SDP formulations and the relationships between them. Apart from supplying derivations, another goal of this section is to illustrate that there is in fact flexibility in the exact choice of formulation for the LP (and consequently the SDP). We provide details on possible variations as part of this discussion as a guide to users who may wish to adapt the SFE framework.

\subsection{LP formulation: Derivation of the dual.}First, recall that our primal LP is defined as
\begin{align*}
    \underset{\mathbf{z},b \in \mathbb{R}^n \times \mathbb{R}}{\max}\{ \xb^{\top}\zb +b\} 
    \ \text{ subject to } \  \mathbf{1}_S ^\top \mathbf{z} +b  \leq f(S) \  \text{ for all } S \subseteq [n].  
\end{align*}
The dual is 
\begin{align}
    \underset{\{y_S\geq 0\}_{S \subseteq [n]}}{\min} \sum_{S \subseteq [n]} y_S f(S) \nonumber
    \text{ subject to } \sum_{S \subseteq [n]} y_S \mathbf{1}_S = \xb, \  \sum_{S \subseteq [n]} y_S=1, \ \text{ for all } S\subseteq [n]. 
\end{align} \nonumber

In order to standardize the derivation, we first convert the primal maximization problem into  minimization (this will be undone at the end of the derivation). We have
\begin{align}
     \underset{\mathbf{z},b \in \mathbb{R}^n \times \mathbb{R}}{\min} \{ -\xb^{\top}\zb -b\} \text{ subject to } \  \mathbf{1}_S ^\top \mathbf{z} +b  \leq f(S) \  \text{ for all } S \subseteq [n].  
\end{align}
The Lagrangian is 
\begin{align}
    \underset{y_S \geq 0}{\mathcal{L}(\zb, y_S, b)} &=  -\xb^{\top}\zb -b - \sum_{S\subseteq [n]}y_S(f(S) -  \mathbf{1}_S ^\top \mathbf{z} -b) \\ 
    &= -\sum_{S \subseteq [n]} y_S f(S) +(\sum_{S \subseteq [n]} y_S  \mathbf{1}_S ^\top - \xb^\top) \mathbf{z} +  b(\sum_{S \subseteq [n]} y_S-1)
\end{align}
The optimal solution $\mathbf{p}^*$ to the primal problem is then
\begin{align}
    \mathbf{p}^* &= \underset{\zb, b}{\min{ }} \underset{y_S \geq 0}{\max{ } } \mathcal{L}(\zb, y_S, b) \\
    &= \underset{y_S \geq 0}{\max{ }} \underset{\zb, b}{\min{ } } \mathcal{L}(\zb, y_S, b) \tag{strong duality} \\ 
    &= \mathbf{d}^*,
\end{align}
where $\mathbf{d}^*$ is the optimal solution to the dual.
From the Lagrangian,
\begin{align}
   \underset{\zb, b}{\min{ } } \mathcal{L}(\zb, y_S, b) = \begin{cases} -\sum_{S \subseteq [n]} y_S f(S), \text{  if  } \sum_{S \subseteq [n]} y_S  \mathbf{1}_S  = \xb \text{  and  } \sum_{S \subseteq [n]} y_S=1, \\
   -\infty,  \text{  otherwise}.
    \end{cases}
\end{align}
Thus, we can write the dual problem as 
\begin{align}
    \mathbf{d}^*= \underset{y_S \geq 0}{\max{ }} -\sum_{S \subseteq [n]} y_S f(S) \text{ subject to } \sum_{S \subseteq [n]} y_S  \mathbf{1}_S  = \xb \text{  and  } \sum_{S \subseteq [n]} y_S=1.
\end{align}
Our proposed dual formulation is then obtained by switching from maximization to minimization and negating the objective. It can also be verified that by taking the dual of our dual, the primal is recovered (see \citet[Def. 20]{el2018learning} for the derivation).

\subsection{Connections to submodularity, related linear programs, and possible alternatives.}
Our LP formulation depends on a linear program known to correspond to the convex closure \cite[Eq. 3.57]{murota1998discrete} (convex envelope) of a discrete function. Some readers may recognize the formal similarities of this formulation with the one used to define the Lov\'{a}sz extension \citep{bilmes2022submodularity}. Namely, for $\mathbf{x} \in \mathbb{R}^n$ we can define the Lov\'{a}sz Extension as 
\begin{align}
    \barf(\mathbf{x}) =\underset{ \mathbf{z} \in \mathcal{B}_f}{\text{max }} \mathbf{x}^\top \mathbf{z}   ,
\end{align}
where the feasible set, known as the base polytope of a submodular function, is defined as $\mathcal{B}_f = \{ \mathbf{z} \in \mathbb{R}^n: \mathbf{z}^\top \mathbf{1}_{S} \leq f(S) \; S\subset [n], \text{ and }  \mathbf{z}^\top \mathbf{1}_{S} = f(S) \text{ when } S=[n]  \}$. Base polytopes are also known as \emph{generalized permutahedra} and have rich connections to the theory of matroids, since matroid polytopes belong to the class of generalized permutahedra \cite{ardila2010matroid}.

An alternative option is to consider $\xb \in \mathbb{R}^n_+$, then the Lov\'{a}sz extension is given by
\begin{align}
    \barf(\mathbf{x}) =\underset{ \mathbf{z} \in \mathcal{P}_f}{\text{max }} \mathbf{x}^\top \mathbf{z} ,
\end{align}
where $\mathcal{P}_f$ is the submodular polyhedron as defined in our original primal LP. The subtle differences between those formulations
lead to differences in the respective dual formulations. In principle, those formulations can be just as easily used to define set function extensions. Overall, there are three key considerations when defining a suitable LP:
\begin{itemize}
    \item The constraints of the primal. 
    \item The domain of the primal variables $\zb, b$ and the cost $\xb$.
    \item The properties of the function being extended.
\end{itemize}
Below, we describe a few illustrative example cases for different choices of the above:
\begin{itemize}
    \item Adding the constraint $\mathbf{z}^\top \mathbf{1}_{S} = f(S) \text{ when } S=[n] $ leads to $y_{[n]} \in \mathbb{R}^n$ for the dual. This implies that the coefficients cannot be interpreted as probabilities in general which is what provides the guarantee that the extension will not introduce any spurious minima.  $\sum_{S \subseteq [n]} y_S=1$ is just an affine hull constraint in that case.
    \item For $b=0$, the constraint $\sum_{S \subseteq [n]} y_S=1$ is not imposed in the dual and the probabilistic interpretation of the extension cannot be guaranteed.  Examples that do not rely on this constraint include the homogeneous convex envelope \citep{el2018combinatorial} and the Lov\'{a}sz extension as presented above. However, even for $b=0$, from the definition of the Lov\'{a}sz extension it is easy to see that it retains the probabilistic interpretation when $\xb \in [0,1]$.
    \item Consider a feasible set defined by $\mathcal{P}_f \bigcap \mathbb{R}^n_+$ and let $\xb \in \mathbb{R}^n_+$.  
    If the function $f$ is submodular, non-decreasing and normalized so that $f(\emptyset)=0$ (e.g., the rank function of a matroid), then the feasible set is called polymatroid and $f$ is a polymatroid function. Again, in that case the Lov\'{a}sz extension achieves the optimal objective value \citep[Eq. 44.32]{schrijver2003combinatorial}. In that case, the constraint $\sum_{S \subseteq [n]} y_S  \mathbf{1}_S  = \xb $ of the dual is relaxed to $\sum_{S \subseteq [n]} y_S  \mathbf{1}_S  \geq \xb $. This feasible set of the dual will  allow for more flexible definitions of an extension but it comes at the cost of generality. For instance, for a submodular function that is not non-decreasing, one cannot obtain the Lov\'{a}sz extension as a feasible solution to the primal LP, and the solutions to this LP will not be the convex envelope in general.
\end{itemize}
\subsection{SDP formulation: The geometric intuition of extensions and deriving the dual.}
In order to motivate the SDP formulation, first we have to identify the essential ingredients of the LP formulation.
First, the constraint $\sum_{S \subseteq [n]} y_S  \mathbf{1}_S  = \xb $ captures the simple idea that each continuous point is expressed as a combination of discrete ones, each representing a different set, which is at the core of our extensions. Then, ensuring that the continuous point lies in the convex hull of those discrete points confers additional benefits w.r.t. optimization and offers a probabilistic perspective.

Consider the following example. The Lov\'{a}sz extension identifies each continuous point in the hypercube with a simplex. Then the continuous point is viewed as an expectation over a distribution supported on the simplex corners. The value of the set function at a continuous point is then the expected value of the function over those corners under the same distribution, i.e., $\E_{S \sim p_\xb}[\mathbf{1}_S] = \xb $ leads to $ \mathbb{E}_{S \sim p_\xb} [f(S)]=\barf(\xb)$. As long as the distribution $p_\xb$ can be differentiated w.r.t $\xb$, we obtain an extension that can be used with gradient-based optimization.
It is clear that the construction depends on being able to identify a small convex set of discrete vectors that can express the continuous one.

This can be formulated in higher dimensions, particularly in the space of PSD matrices.  A natural way to represent sets in high dimensions is through rank one matrices that are outer
products of the indicator vectors of the sets, i.e., $\mathbf{1}_S\mathbf{1}_S^\top $ is the matrix representation of $S$ similar to how $\mathbf{1}_S$ is the vector representation. Hence, in the space of matrices, our goal will be again to identify a set of discrete  \emph{matrices} that represents sets that can express a matrix of continuous values.


 The above considerations set the stage for a transition from linear programming to semidefinite programming, where the feasible sets are spectrahedra. Our SDP formulation attempts to capture the intuition described in the previous paragraphs while also maintaining formal connections to the LP by showing that feasible LP regions correspond to feasible SDP regions by simply projecting the LP regions on the space of diagonal matrices (see Proposition \ref{prop:sdp_lp}).

\paragraph{Derivation of the dual.}
Recall that our primal SDP is defined as
 \begin{align}
     \max_{\mathbf{Z} 	\succeq 0, b \in \mathbb{R}} \{\text{Tr}(\mathbf{X^\top Z}) + b\} \text{ subject to } \frac{1}{2}\text{Tr}((\mathbf{1}_S \mathbf{1}_T^\top + \mathbf{1}_T \mathbf{1}_S^\top) \mathbf{Z}) + b \leq f(S\cap T) \text{ for } S,T \subseteq [n].
 \end{align}
 We will show that the dual is
\begin{align}
    &  \min_{ \{y_{S,T}\geq 0 \}}\sum_{ S,
     \subseteq [n]}  y_{S, T}  f(S\cap T) \ \text{ subject to } \ \mathbf{X}\preceq \sum_{S,T \subseteq [n]}\frac{1}{2} y_{S , T}(\mathbf{1}_{S} \mathbf{1}_{T}^\top+ \mathbf{1}_{T} \mathbf{1}_{S}^\top) \ \ \text{ and } \sum_{ S, T \subseteq [n]} y_{{S,T}} = 1.
\end{align}
As before, we convert the primal to a minimization problem:
 \begin{align}
     \max_{\mathbf{Z} 	\succeq 0, b \in \mathbb{R}} \{-\text{Tr}(\mathbf{X^\top Z}) - b\} \text{ subject to } \frac{1}{2}\text{Tr}((\mathbf{1}_S \mathbf{1}_T^\top + \mathbf{1}_T \mathbf{1}_S^\top) \mathbf{Z}) + b \leq f(S\cap T) \text{ for } S,T \subseteq [n].
 \end{align}
 First, we will standardize the formulation by converting the inequality constraints into equality constraints. This can be achieved by adding a positive slack variable $d_{S,T}$ to each constraint such that
 \begin{align}
     \frac{1}{2}\text{Tr}((\mathbf{1}_S \mathbf{1}_T^\top + \mathbf{1}_T \mathbf{1}_S^\top) \mathbf{Z}) + b + d_{S,T} =  f(S\cap T).
 \end{align}
 In matrix notation this is done by introducing the positive diagonal slack matrix $\mathbf{D}$ to the decision variable $\mathbf{Z}$, and extending the symmetric matrices in each constraint
 \begin{align}
     \mathbf{Z}' =
     \begin{bmatrix} \mathbf{Z} & 0 \\
     0 & \mathbf{D}
     \end{bmatrix}, \quad 
          \mathbf{X}' = \begin{bmatrix}
     \mathbf{X} & 0 \\
     0& 0 
     \end{bmatrix}, \quad
     \mathbf{A}_{S,T}' = \begin{bmatrix}\frac{1}{2}(\mathbf{1}_S \mathbf{1}_T^\top + \mathbf{1}_T \mathbf{1}_S^\top) & 0 \\
     0 & \text{diag}(\mathbf{e}_{S,T})
     \end{bmatrix},
 \end{align}
 where $\text{diag}(\mathbf{e}_{S,T})$ is a diagonal matrix where all diagonal entries are zero except at the diagonal entry corresponding to the constraint on $S,T$ which has a $1$.
 Using this reformulation, we obtain an equivalent SDP in standard form:
  \begin{align}
     \max_{\mathbf{Z'} 	\succeq 0, b \in \mathbb{R}} \{-\text{Tr}(\mathbf{X'^\top Z'}) - b\} \text{ subject to } \text{Tr}( \mathbf{A}_{S,T}' \mathbf{Z}') + b = f(S\cap T) \text{ for } S,T \subseteq [n].
 \end{align}
 Next, we form the Lagrangian which features a decision variable $y_{S,T}$ for each inequality, and a dual matrix variable $\boldsymbol{\Lambda}$.
 We have 
 \begin{align}
     \mathcal{L}(\mathbf{Z}',b, y_{S,T}, \boldsymbol{\Lambda}) &= -\text{Tr}(\mathbf{X'^\top Z'}) - b - \sum_{ S,T
     \subseteq [n]}  y_{S, T}\left(2f(S\cap T) -  \text{Tr}( \mathbf{A}_{S,T}' \mathbf{Z}') - b\right) - \text{Tr}(\boldsymbol{\Lambda}\mathbf{Z'}) \\
     &= \text{Tr}\left(((\sum_{ S,T
     \subseteq [n]}  y_{S, T} \mathbf{A}_{S,T}') - \mathbf{X'} - \boldsymbol{\Lambda} )\mathbf{Z}' \right) + b(\sum_{ S,T
     \subseteq [n]} y_{S, T} - 1)  - \sum_{ S,T
     \subseteq [n]}  y_{S, T} f(S \cap T)
 \end{align}
 For the solution to the primal $\mathbf{p}^*$, we have
 \begin{align}
    \mathbf{p}^* &= \underset{\mathbf{Z}',b}{\min} \underset{\boldsymbol{\Lambda}, y_{S,T}}{\max} \mathcal{L}(\mathbf{Z}',b, y_{S,T},\boldsymbol{\Lambda}) \\
    &\geq   \underset{\boldsymbol{\Lambda}, y_{S,T}}{\max } \tag{weak duality} \underset{\mathbf{Z}',b}{\min{ } }  \mathcal{L}(\mathbf{Z}',b, y_{S,T},\boldsymbol{\Lambda}) \\
     &=\mathbf{d}^*.
 \end{align}
 For our Lagrangian we have the dual function 
 \begin{align}
      \underset{\mathbf{Z'},b}{\min{ } }  \mathcal{L}(\mathbf{Z}',b,y_{S,T},\boldsymbol{\Lambda}) = \begin{cases}
      0, \text{ if } \boldsymbol{\Lambda} \succeq 0, \\
      -\infty, \text{ otherwise }.
      \end{cases}
 \end{align}
 Thus, the dual function $ \underset{\mathbf{Z}',b}{\min{ } }  \mathcal{L}(\mathbf{Z}',b,y_{S,T},\boldsymbol{\Lambda})$ takes non-infinite values under the conditions
\begin{align}
  ( \sum_{ S,T
     \subseteq [n]}  y_{S, T} \mathbf{A}_{S,T}') - \mathbf{X'} - \boldsymbol{\Lambda}  &= 0, \\
     \boldsymbol{\Lambda} & \succeq 0, \\
      \text{  and } \sum_{ S,T
     \subseteq [n]} y_{S, T} - 1  &= 0.
\end{align}
The first two conditions imply the linear matrix inequality (LMI)
\begin{align}
      \sum_{ S,T
     \subseteq [n]}  y_{S, T} \mathbf{A}_{S,T}' - \mathbf{X'} \succeq 0 \tag{$\boldsymbol{\Lambda} \succeq 0$}.
\end{align}
From the definition of $\mathbf{A}_{S,T}'$ we know that its additional diagonal entries will correspond to the variables $y_{S,T}$. Combined with the conditions above, we arrive at the constraints of the dual
\begin{align}
    y_{S,T} & \geq 0, \\
  \sum_{S,T \subseteq [n]} \frac{1}{2}y_{S , T}(\mathbf{1}_{S} \mathbf{1}_{T}^\top+ \mathbf{1}_{T} \mathbf{1}_{S}^\top) & \succeq  \mathbf{X}, \\
  \sum_{ S,T \subseteq [n]} y_{S, T}   & = 1.
\end{align}
This leads us to the dual formulation
\begin{align}
    \underset{y_{S,T}\geq 0}{\max{}} - \sum_{ S,T
     \subseteq [n]} y_{S, T} f(S \cap T)  \text{  subject to  } 
      \sum_{S,T \subseteq [n]} \frac{1}{2}y_{S , T}(\mathbf{1}_{S} \mathbf{1}_{T}^\top+ \mathbf{1}_{T} \mathbf{1}_{S}^\top) & \succeq  \mathbf{X} \text {  and  }\sum_{ S,T \subseteq [n]} y_{S, T}   = 1.
\end{align}
Then, we can obtain our original dual by switching to minimization and negating the objective.

\section{Scalar Set Function Extensions Have No Bad Minima}\label{app: vector SFE proofs}
In this section we re-state and prove the results from Section \ref{sec: scalar SFEs}. The first result concerns the minima of $\barf$, showing that the minimum value is the same as that of $f$, and no additional minima are added (besides convex combinations of discrete minimizers). These properties are especially desirable when using an extension $\barf$ as a loss function (see Section \ref{sec: experiments}) since it is important that $\barf$ drive the neural network $\text{NN}_1$ towards producing discrete $\mathbf{1}_S$ outputs.

\begin{proposition}[Scalar SFEs have no bad minima]
If $\barf$ is a scalar SFE of $f$ then:
\begin{enumerate}
    \item $\min_{\xb\in  \calX} \barf(\xb)=\min_{S \subseteq [n]} f(S) $
    \item 
    $\argmin_{\xb\in \calX} \barf(\xb) \subseteq \text{Hull} \big (\argmin_{\mathbf{1}_S : S \subseteq [n]} f(S) \big )$ 
\end{enumerate}
\end{proposition}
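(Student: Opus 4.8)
The plan is to lean entirely on the probabilistic interpretation of a scalar SFE. By definition, $y_S = p_\xb(S)$ is a feasible solution of the dual LP, so for every $\xb \in \calX = [0,1]^n$ the coefficients $p_\xb$ form a probability distribution on $2^{[n]}$ (non-negativity together with $\sum_{S} p_\xb(S) = 1$) whose barycenter is $\xb$ (the constraint $\sum_S p_\xb(S)\mathbf{1}_S = \xb$), and the extension value is the expectation $\barf(\xb) = \sum_S p_\xb(S) f(S) = \mathbb{E}_{S\sim p_\xb}[f(S)]$. Write $f^{\min} = \min_{S\subseteq[n]} f(S)$ and $\mathcal{M} = \argmin_{S\subseteq[n]} f(S)$ for the collection of discrete minimizers. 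Both claims will drop out of combining this expectation formula with the extension property $\barf(\mathbf{1}_S) = f(S)$.

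For claim 1, the lower bound is immediate: since $f(S) \geq f^{\min}$ for all $S$ and $p_\xb$ is a probability distribution, $\barf(\xb) = \sum_S p_\xb(S) f(S) \geq f^{\min} \sum_S p_\xb(S) = f^{\min}$, so $\min_{\xb\in\calX}\barf(\xb) \geq f^{\min}$. For the matching upper bound I would pick any $S^{\min} \in \mathcal{M}$ and invoke the extension property: $\barf(\mathbf{1}_{S^{\min}}) = f(S^{\min}) = f^{\min}$, so $\min_{\xb\in\calX}\barf(\xb) \leq f^{\min}$. Combining the two gives $\min_{\xb\in\calX}\barf(\xb) = f^{\min}$.

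For claim 2, suppose $\xb$ is a global minimizer of $\barf$, so $\barf(\xb) = f^{\min}$ by claim 1. Then $0 = \sum_S p_\xb(S)\bigl(f(S) - f^{\min}\bigr)$ is a sum of non-negative terms, which forces $p_\xb(S) = 0$ whenever $f(S) > f^{\min}$; that is, $\mathrm{supp}(p_\xb) \subseteq \mathcal{M}$. Substituting back into the barycenter identity yields $\xb = \sum_{S\in\mathcal{M}} p_\xb(S)\,\mathbf{1}_S$, a convex combination of $\{\mathbf{1}_S : S \in \mathcal{M}\}$, hence $\xb \in \text{Hull}\bigl(\{\mathbf{1}_S : S\in\mathcal{M}\}\bigr)$, which is exactly the claimed containment.

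There is no genuine obstacle here — every step is just unpacking the definition — so the only thing to be careful about is making sure both ingredients of a scalar SFE are used: the probability-distribution structure of $p_\xb$ gives the lower bound in claim 1 and the support localization in claim 2, while the extension property $\barf(\mathbf{1}_S) = f(S)$ is what guarantees the lower bound is actually attained, closing claim 1. (One could also remark that the feasibility of the dual LP, used implicitly throughout, is itself a nontrivial requirement on $p_\xb$, but it is part of the standing definition of a scalar SFE and need not be re-established.)
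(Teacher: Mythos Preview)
Your proposal is correct and follows essentially the same approach as the paper's own proof: both use the expectation formula $\barf(\xb)=\mathbb{E}_{S\sim p_\xb}[f(S)]$ together with the extension property to obtain the two-sided bound in claim~1, and then argue that tightness of the inequality forces the support of $p_\xb$ to lie in the discrete minimizers, whence the barycenter identity $\xb=\sum_S p_\xb(S)\mathbf{1}_S$ yields the convex-hull containment in claim~2. Your write-up is, if anything, slightly cleaner in phrasing the support-localization step as ``a sum of non-negative terms equal to zero,'' but the ideas are identical.
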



\begin{proof}
The inequality $\min_{\xb\in \calX} \barf(\xb) \leq \min_{S \subseteq [n]} f(S) $ automatically holds since $\min_{S \subseteq [n]} f(S) = \min_{\mathbf{1}_S : S \subseteq [n]} \barf(\mathbf{1}_S)$, and $\{\mathbf{1}_S : S \subseteq [n]\} \subseteq \calX$. So it remains to show the reverse. Indeed, letting $\xb \in \calX$ be an arbitrary point we have,
\begin{align*}
     \barf (\xb) &=  \mathbb{E}_{S \sim p_\xb} [f(S)] \\
     &=\sum_{S \subseteq [n]} p_\xb (S) \cdot f(S) \\
     &\geq \sum_{S \subseteq [n]} p_\xb (S) \cdot \min_{S \subseteq [n]} f ( S ) \\
     &=\min_{S \subseteq [n]} f ( S )
\end{align*} 
where the last equality simply uses the fact that $\sum_{S \subseteq [n]} p_\xb (S)=1$. This proves the first claim.

To prove the second claim, suppose that $\xb$ minimizes  $ \barf(\xb)$ over $\xb\in \calX$. This implies that the inequality  in the above derivation must be tight, which is true if and only if
\begin{align*}
p_\xb (S) \cdot  f(S) =  p_\xb (S) \cdot \min_{S \subseteq [n]} f ( S )  \quad  \text{ for all } S \subseteq [n].
\end{align*}
 For a given $S$, this implies that either $p_\xb (S)=0$ or $ f(S) = \min_{S \subseteq [n]} f ( S ) $. Since  $\xb = \mathbb{E}_{p_\xb}[ \vone_S] =  \sum_{S \subseteq [n]} p_\xb(S)\cdot \vone_S =\sum_{S: p_\xb(S) >0} p_\xb(S)\cdot \vone_S$. This is precisely a convex combination of points $\vone_S$ for which $f(S) =  \min_{S \subseteq [n]} f(S)$. Since $\barf$ is a convex combination of exactly this set of points $\vone_S$, we have the second claim.

\end{proof}

\section{Examples of Vector Set Function Extensions}\label{app: vector SFE examples}

This section re-defines the vector SFEs given in Section \ref{sec: constructing scalar SFEs}, and prove that they satisfy the definition of an SFEs. One of the conditions we must check is that $\barf$ is continuous. A sufficient condition for continuity (and almost everywhere differentiability) that we shall use for a number of constructions is to show that $\barf$ is Lipschitz. A very simple computation shows that it suffices to show that $\xb \in \calX  \mapsto p_\xb(S)$ is Lipschitz continuous.

\begin{lemma}\label{prop: lipschitz}
If the mapping $\xb \in [0,1]^n \mapsto p_\xb(S)$ is Lipschitz continuous and $f(S)$ is finite for all $S$ in the support of $p_\xb$, then $\barf$ is also Lipschitz continuous. In particular, $\barf$ is continuous and almost everywhere differentiable. 
\end{lemma}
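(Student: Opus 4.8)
The statement is Lemma~\ref{prop: lipschitz}: if $\xb \mapsto p_\xb(S)$ is Lipschitz for every $S$ in the (finite) support of $p_\xb$, and $f(S)$ is finite on that support, then $\barf(\xb) = \sum_S p_\xb(S) f(S)$ is Lipschitz, hence continuous and (by Rademacher's theorem) almost everywhere differentiable. The plan is a direct estimate: fix $\xb, \xb' \in [0,1]^n$ and bound $|\barf(\xb) - \barf(\xb')|$ by a constant times $\|\xb - \xb'\|$.

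First I would let $\mathcal{S}$ denote the common finite support of the coefficient maps (one can take the union of supports over all points, which is finite since each $p_\xb$ is supported on $O(n)$ sets chosen from a fixed combinatorial family; alternatively, and more cleanly, one simply assumes as in the hypothesis that a single finite collection $\mathcal{S}$ carries all the mass and $f$ is finite on it). Write $L_S$ for the Lipschitz constant of $\xb \mapsto p_\xb(S)$ and $M = \max_{S \in \mathcal{S}} |f(S)| < \infty$. Then
\begin{align*}
    |\barf(\xb) - \barf(\xb')|
    &= \Big| \sum_{S \in \mathcal{S}} \big( p_\xb(S) - p_{\xb'}(S) \big) f(S) \Big| \\
    &\leq \sum_{S \in \mathcal{S}} |p_\xb(S) - p_{\xb'}(S)| \cdot |f(S)| \\
    &\leq M \sum_{S \in \mathcal{S}} L_S \, \|\xb - \xb'\| \\
    &= \Big( M \sum_{S \in \mathcal{S}} L_S \Big) \|\xb - \xb'\|,
\end{align*}
so $\barf$ is Lipschitz with constant $M \sum_{S \in \mathcal{S}} L_S$, which is finite because the sum is over a finite set. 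Continuity is immediate from Lipschitzness, and almost-everywhere differentiability follows from Rademacher's theorem applied to the Lipschitz function $\barf$ on $[0,1]^n \subseteq \mathbb{R}^n$.

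There is essentially no obstacle here — the only point requiring a word of care is ensuring the index set in the sum is genuinely finite (so that $\sum_S L_S$ and $\max_S |f(S)|$ are finite), which is exactly what the hypothesis "$f(S)$ is finite for all $S$ in the support of $p_\xb$" together with the standing assumption that SFEs are supported on small set collections guarantees. If one wanted to avoid even mentioning a global support, one could instead argue locally: near any fixed $\xb$, continuity of the finitely many maps $p_\xb(S)$ forces $p_{\xb'}(S)$ to vanish for $S$ outside a neighborhood-independent finite set, reducing again to a finite sum. I would present the global-support version as it is shortest.
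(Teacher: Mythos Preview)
Your proof is correct and takes essentially the same approach as the paper: a direct triangle-inequality estimate bounding $|\barf(\xb)-\barf(\xb')|$ by (max of $|f|$ on the support) times (sum of per-set Lipschitz constants) times $\|\xb-\xb'\|$, followed by Rademacher for a.e.\ differentiability. Your version is in fact slightly tidier, using $\max_S |f(S)|$ rather than $\max_S f(S)$ and naming Rademacher explicitly.
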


\begin{proof}
The Lipschitz continuity of  $\barf (\xb)$ follows directly from definition:
\begin{align*}
  \big |  \barf(\xb) -  \barf(\xb') \big |  &=  \bigg |\sum_{S \subseteq [n]} p_\xb(S)\cdot f(S) - \sum_{S \subseteq [n]} p_{\xb'}(S)\cdot f(S) \bigg |\\
    &=\bigg | \sum_{S \subseteq [n]} \big (p_\xb(S) - p_{\xb'}(S)\big ) \cdot f(S)  \bigg | 
    \leq  \left(2 k L \max_{S \subseteq [n]} f(S) \right) \, \cdot \| \xb - \xb' \|,  
\end{align*}
where $L$ is the maximum Lipschitz constant of $\xb \mapsto p_\xb(S)$ over any $S$ in the support of $p_\xb$, and $k$ is the maximal cardinality of the support of any $p_\xb$. 
\end{proof}

In general $k$ can be trivially bounded by $2^n$, so $\barf$ is always Lipschitz. However in may cases the cardinality of the support of any $p_\xb$ is much smaller than $2^n$, leading too a smaller Lipschitz constant. For instance, $k=n$ in the case of the Lov\'{a}sz  extension.

\subsection{Lov\'{a}sz  extension.}\label{app: lovasz}

Recall the definition: $\xb$ is sorted so that $x_1 \geq x_2 \geq \ldots \geq x_d$. Then the Lov\'{a}sz  extension corresponds to taking $S_i = \{1,\ldots , i\}$, and letting $p_\xb({S_i})= x_i - x_{i+1}$, the non-negative increments of $\xb$ (where recall we take $x_{n+1}=0$). All other sets have zero probability. For convenience, we introduce the shorthand notation $a_i =p_\xb(S_i) = x_i - x_{i+1}$

\paragraph{Feasibility.} Clearly all $a_i = x_i - x_{i+1} \geq 0$, and $\sum_{i=1}^n a_i = \sum_{i=1}^n (x_i - x_{i+1}) = x_1 \leq 1$. Any remaining probability mass is assigned to the empty set: $p_\xb(\emptyset) = 1-x_1$, which contributes nothing to the extension $\barf$ since $f(\emptyset)=0$ by assumption. All that remains is to check that
$$\sum_{i=1}^n p_\xb(S_i)\cdot \vone_{S_i}= \xb.$$
For a given $k \in [n]$, note that the only sets $S_i$ with non-zero $k$th coordinate are $S_1, \ldots , S_k$, and in all cases $(\vone_{S_i})_k=1$. So the $k$th coordinate is precisely $\sum_{i=1}^k p_\xb(S_i)= \sum_{i=1}^k (x_i - x_{i+1})=x_k$, yielding the desired formula. 

\paragraph{Extension.} Consider an arbitrary $S \subseteq [n]$. Since we assume $\xb= \vone_S$ is sorted, it has the form $\vone_S= ( \underbrace{ 1,1, \ldots , 1}_{k \text{ times}}, 0, 0, \ldots 0)^\top$. Therefore, for each $j < k$ we have $a_j =  x_j- x_{j+1} = 1-1=0 $ and for each $j > k$ we have $a_j =  x_j - x_{j+1} = 0-0=0 $. The only non-zero probability is $a_k =  x_k - x_{k+1} = 1-0 =1$. So, 
$$\barf(\vone_S) = \sum_{i=1}^n a_i f(S_i) =  \sum_{i : i\neq k} a_i f(S_i) + a_k f(S_k) = 0 + 1\cdot f(S_k) = f(S)$$
where the the final equality follows since by definition $S_k$ corresponds exactly to the vector $( \underbrace{ 1,1, \ldots , 1}_{k \text{ times}}, 0, 0, \ldots 0)^\top= \vone_S$ and so $S_k=S$.

\paragraph{Continuity.}

The Lov\'{a}sz is a well-known extension, whose properties have been carefully studied. In particular it is well known to be a Lipschitz function \cite{bach2019submodular}. However, for completeness we provide a simple proof here nonetheless.

 \begin{lemma}
 Let $p_\xb$ be as defined for the Lov\'{a}sz  extension. Then $\xb \mapsto p_\xb(S)$ is Lipschitz for all $S \subseteq [n]$.
 \end{lemma}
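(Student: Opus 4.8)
The plan is to exhibit a closed-form expression for $p_\xb(S)$ that makes no reference to the (discontinuous) sorting permutation, and then read the Lipschitz constant directly off that expression. The subtle point is precisely this: the definition of $p_\xb$ first re-indexes $\xb$ so that its coordinates are non-increasing, and the sorting permutation jumps as $\xb$ crosses the hyperplanes $\{x_i = x_j\}$, so it is not a priori clear that $\xb \mapsto p_\xb(S)$ is even continuous. Once a permutation-free formula is in hand, the remaining estimates are elementary.

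First I would prove that for every nonempty $S \subseteq [n]$,
\[
    p_\xb(S) \;=\; \max\Big( 0,\ \min_{j \in S} x_j \;-\; \max_{j \notin S} x_j \Big),
\]
with the convention $\max_{j \notin [n]} x_j \defeq 0$ (consistent with $x_{n+1}=0$), and separately $p_\xb(\emptyset) = 1 - \max_{j} x_j$. To see this, fix any permutation $\sigma$ sorting the coordinates of $\xb$ non-increasingly. By construction $p_\xb(S)$ is nonzero only when $S$ is a prefix $\{\sigma(1),\dots,\sigma(k)\}$, in which case $p_\xb(S) = x_{\sigma(k)} - x_{\sigma(k+1)} = \min_{j\in S} x_j - \max_{j\notin S} x_j \ge 0$, so the formula holds. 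If $S$ is not such a prefix, pick $a \in S$ and $b \notin S$ with $\sigma^{-1}(b) < \sigma^{-1}(a)$; then $\min_{j\in S} x_j \le x_a \le x_b \le \max_{j\notin S} x_j$, so the argument of $\max(0,\cdot)$ is $\le 0$ and the right-hand side is $0 = p_\xb(S)$. (At points with ties a set may be a prefix for one valid sorting and not another, but then the corresponding increment is $0$, so both cases give the same value and the expression is unambiguous.)

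Given the formula, Lipschitz continuity is immediate: each coordinate projection $\xb \mapsto x_j$ is $1$-Lipschitz, a pointwise minimum or maximum of finitely many $1$-Lipschitz functions is $1$-Lipschitz, a difference of two $1$-Lipschitz functions is $2$-Lipschitz, and $t \mapsto \max(0,t)$ is $1$-Lipschitz; composing, $\xb \mapsto p_\xb(S)$ is $2$-Lipschitz for every nonempty $S$, and $\xb \mapsto p_\xb(\emptyset) = 1 - \max_j x_j$ is $1$-Lipschitz. Combined with Lemma~\ref{prop: lipschitz}, this also shows the Lov\'asz extension $\barf$ is Lipschitz, hence continuous and a.e.\ differentiable, completing the "Continuity" claim for this example. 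The only genuine obstacle is the first step---seeing past the permutation to the permutation-free formula---and I expect everything after that to be routine.
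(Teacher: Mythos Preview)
Your proof is correct and takes a genuinely different route from the paper. The paper argues that $p_\xb$ is piecewise linear, one linear piece per ordering of the coordinates, and then verifies continuity at the codimension-one boundaries $\{x_i = x_{i+1}\}$ by perturbing $\xb \mapsto \xb - \delta \mathbf{e}_i$ and tracking the four affected sets by hand; global Lipschitzness then follows from continuity plus piecewise linearity. Your approach instead eliminates the sorting permutation entirely by writing
\[
p_\xb(S)=\max\!\Big(0,\ \min_{j\in S}x_j-\max_{j\notin S}x_j\Big),
\]
after which Lipschitzness is immediate from the standard calculus of $\min$, $\max$, differences, and positive parts. What you gain is a clean explicit constant ($2$-Lipschitz), a uniform treatment of all boundary points including those with multiple ties, and a formula that is independently useful. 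What the paper's approach buys is that it requires no cleverness beyond recognizing the piecewise-linear structure; it is more mechanical, at the cost of a fiddlier case analysis and no explicit constant. Both are complete; yours is the more economical argument.
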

 
 \begin{proof}

 First note that $p_\xb$ is piecewise linear, with one piece per possible ordering  $x_1 \geq x_2 \geq \ldots \geq x_n$ (so $n!$ pieces in total). Within the interior of each piece $p_\xb$ is linear, and therefore Lipschitz. So in order to prove global Lipschitzness, it suffices to show that $p_\xb$ is continuous at the boundaries between pieces (the  Lipschitz constant is then the  maximum of the Lipschitz constants for  each linear piece). 
 
 Now consider a point $\xb$ with   $x_1 \geq  \ldots\geq x_i = x_{i+1} \geq \ldots \geq x_n$. Consider the perturbed point $\xb_\delta = \xb - \delta \mathbf{e}_i$ with $\delta>0$, and $\mathbf{e}_i$ denoting the $i$th standard basis vector. To prove continuity of  $p_\xb$  it suffices  to show that for any $S \in \Omega$ we have $p_{\xb_\delta}(S) \rightarrow p_\xb(S)$ as $\delta \rightarrow 0^+$. 
 
 There are two sets in the support of $p_\xb$ whose probabilities are different under $p_{\xb_\delta}$, namely:  $S_i = \{ 1,\ldots , i\}$ and $S_{i+1} = \{ 1,\ldots , i,  i+1\}$. Similarly, there are  two sets in the support of  $p_{\xb_\delta}$ whose probabilities are different under $p_\xb$, namely: $S'_i = \{ 1,\ldots , i-1, i+1\}$ and $S'_{i+1} = \{ 1,\ldots ,  i,  i+1\} = S_{i+1}$.
 So it suffices to show the convergence $p_{\xb_\delta}(S) \rightarrow p_\xb(S)$ for these four $S$. Consider first $S_i$: 
 $$\big |p_{\xb_\delta}(S_i) - p_\xb(S_i) \big | = \big | 0 - (x_i - x_{i+1}) \big | = 0 $$
 where the final equality uses the fact that $x_i = x_{i+1}$. Next consider $S_{i+1} = S'_{i+1}$: 
  $$\big |p_{\xb_\delta}(S_{i+1}) - p_\xb(S_{i+1}) \big | = \big |  (x'_{i+1} - x'_{i+2})  - (x_{i+1} - x_{i+2}) \big | = \big |  (x'_{i+1} - x_{i+1})  - (x'_{i+2} - x_{i+2}) \big | = 0 $$
  Finally, we consider $S'_{i}$:
   \begin{align*}
       \big |p_{\xb_\delta}(S'_{i}) - p_\xb(S'_{i}) \big | &= \big |  (x'_{i} - x'_{i+1})  - (x_{i} - x_{i+1}) \big | \\
       &= \big |  (x'_{i+1} - x_{i+1})  - (x'_{i+1} - x_{i+1}) \big | \\ 
       &=  \big |  (x_{i+1} - \delta - x_{i+1})  - (x'_{i+1} - x_{i+1}) \big |\\
       &= \delta \rightarrow 0
    \end{align*}
    
    completing the proof.
     \end{proof}

\subsection{Bounded cardinality  Lova\'{s}z extension.}The bounded cardinality extension coefficients $p_\xb(S)$  are the coordinates of the vector $\mathbf{y}$, where $\mathbf{y}= \mathbf{S}^{-1}\mathbf{x}$ and the entries $(i,j)$ of the inverse are
 \begin{align}
     \mathbf{S}^{-1}(i,j) = \begin{cases}
       1, \text{  if }  (j-i) \text { mod } k = 0 \text{ and } i\leq j, \\
       -1, \text{  if }  (j-i) \text { mod } k = 1 \text{ and } i\leq j, \\
       0, \text { otherwise}.
     \end{cases}
 \end{align}

\paragraph{Equivalence to the  Lova\'{s}z extension.}
We want to show that the bounded cardinality extension is equivalent to the  Lova\'{s}z extension when $k=n$.
 Let $T_{i,k}= \{j \; | \; (j-i) \text{ mod } k =0, \text{ for } i\leq j\leq n, \; j \in \mathbb{Z}_+\}$, i.e., $T_{i,k}$ stores the indices where $j-i$ is perfectly divided by $k$. From the analytic form of the inverse, observe that the $i$-th  coordinate of $\mathbf{y}$ is $p_\xb (S_i)= \sum_{j \in T_{i,k}} (x_j-x_{j+1})$. For $k=n$, we have $T_{i,n}=  \{j \; | \; (j-i) \text{ mod } n =0 \} =\{ i \}$, and therefore $p_\xb (S_i)=  x_i-x_{i+1}$, which are the coefficients of the Lov\'{a}sz extension.

 \paragraph{Feasibility.} 
 The equation $\mathbf{y}= \mathbf{S}^{-1}\mathbf{x}$ guarantees that the constraint $\mathbf{x} = \sum_{i=1}^n y_{S_i} \mathbf{1}_{S_i}$ is obeyed.
Recall that $\mathbf{x}$ is sorted in descending order like in the case of the Lov\'{a}sz extension. Then, it is easy to see that $p_\xb (S_i)= \sum_{j \in T_{i,k}} (x_j-x_{j+1})\leq x_i$, because $x_i-x_{i+1}$ is always contained in the summation for $p_\xb (S_i)$. Therefore, by restricting $\mathbf{x}$ in the probability simplex it is easy to see that $\sum_{i=1}^n p_\xb (S_i) \leq \sum_{i=1}^n x_i = 1 $. To secure tight equality, we allocate the rest of the mass to the empty set, i.e., $p_\xb(\emptyset)= 1-\sum_{i=1}^n p_\xb (S_i)$, which does not affect the value of the extension sicne the corresponding Boolean is the zero vector.
 
\paragraph{Extension.}
To prove the extension property we need to show that $\barf(\mathbf{1}_S) = f(S)$ for all $S$ with $|S|\leq k$.  Consider any such set $S$ and recall that we have sorted $\mathbf{1}_S$ with arbitrary tie breaks, such that $x_i=1$ for $i\leq |S|$ and $x_i=0$ otherwise. Due to the equivalence with the 
 Lova\'{s}z extension, the extension property is guaranteed when $k=n$ for all possible sets. For $k < n$,  consider the following three cases for $T_{i,k}$. 
 \begin{itemize}
     \item When $i>|S|$, $T_{i,k} = \emptyset$ because for sorted $\mathbf{x}$ of cardinality at most $k$, we know for the coordinates that $x_i=x_{i+1}=0$. For $i>k$, this implies that $p_\xb(S_i)=0$.
     \item When $i<|S|$, $\sum_{j \in T_{i,k}} (x_j-x_{j+1}) = 0$ because $x_j=x_{j+1}=1$ and we have again $p_\xb(S_i)=0$.
     \item When $i=|S|$, observe that $ \sum_{j \in T_{i,k}} (x_j-x_{j+1}) = x_i-x_{i+1} = x_i$. Therefore, $p_\xb(S_i)=1.$ in that case.  
 \end{itemize}
 Bringing it all together, $\barf(\mathbf{1}_S)= \sum_{i=1}^n p_\xb f(S_i)=  p_\xb(S) f(S) = f(S)$ since the sum contains only one nonzero term, the one that corresponds to $i=|S|$.
 
\paragraph{Continuity.}
Similar to the Lova\'{s}z extension, $p_\xb$ in the bounded cardinality extension is piecewise linear and therefore a.e. differentiable with respect to $\mathbf{x}$, where each piece corresponds to an ordering of the coordinates of $\mathbf{x}$. On the other hand, unlike the Lova\'{s}z extension, the mapping $\mathbf{x} \mapsto p_\xb(S)$ is not necessarily globally Lipschitz when $k < n$, because it is not guaranteed to be Lipschitz continuous at the boundaries.

\subsection{Singleton extension.}

 \paragraph{Feasibility.}

The singleton extension is not dual LP feasible. However, one of the key reasons why feasibility is important is that it implies Proposition \ref{prop: nice properties of extension}, which show that optimizing $\barf$ is a reasonable surrogate to $f$. In the case of the singleton extension, however, Proposition \ref{prop: nice properties of extension} still holds even without feasibility for $f$. This includes the case of the training accuracy loss, which can be viewed as minimizing the set function $f(\{\hat{y}\}) = -\mathbf{1}\{ y_i =\hat{y} \}$.

 Here  we give an alternative proof of Proposition \ref{prop: nice properties of extension} for the singleton extension. Consider the same  assumptions as Proposition \ref{prop: nice properties of extension} with the additional requirement that $\min_S f(S) < 0 $ (this merely asserts hat $S = \emptyset$ is not a trivial solution to the minimization problem, and that the minimizer of $f$ is unique. This is true, for example, for the training accuracy objective we consider in Section \ref{sec: experiments}.

\begin{proof}[Proof of Proposition \ref{prop: nice properties of extension} for singleton extension] For  $\xb \in \calX = [0,1]^n$,
\begin{align*}
    \barf(\xb) &= \sum_{i=1}^n p_\xb(S_i) f(S_i) \\
    &= \sum_{i=1}^n (x_i - x_{i+1}) f(S_i) \\
    &\geq  \sum_{i=1}^n (x_i - x_{i+1})\min_{j \in [n]} f(S_j) \\
    &\geq  (x_1 - x_{n+1})\min_{j \in [n]} f(S_j) \\
    &\geq  x_1 \cdot \min_{j \in [n]}f(S_j) \\
     &\geq  \min_{j \in [n]}f(S_j) \\
\end{align*}
where the final inequality follows since $\min_{j \in [n]}f(S_j)<0$. Taking $\xb=(1, 0, 0, \ldots , 0)^\top$ shows that all the inequalities can be made tight, and the first statement of Proposition \ref{prop: nice properties of extension} holds. For the second statement, suppose that  $\xb \in \calX = [0,1]^n$ minimizes $\barf$. Then all the inequality in the preceding argument must be tight. In particular, tightness of the final inequality implies that $x_1=1$. Meanwhile, tightness of the first inequaliity implies that $ x_i - x_{i+1} = 0$ for all $i$ for which  $f(S_i) \neq \min_{j \in [n]} f(S_j)$, and tightness of the second inequality implies that $x_{n+1}=0$. These together imply that $\xb = \mathbf{1}    \oplus \mathbf{0}_{n-1}$  where $\mathbf{1} $ is a $1 \times 1$  vector with entry equal to one, and  $\mathbf{0}_{n-1}$ is an all zeros vectors of length $n-1$, and $ \oplus$ denotes concatenation. Since $f(S_1) =\min_{j \in [n]} f(S_j)$ is the unique minimize we have that $\xb  = \mathbf{1}_{S_1} \in \text{Hull} \big (\argmin_{\mathbf{1}_{S_i} : i \in [n]} f(S_i) \big )$, completing the proof.
\end{proof}

\paragraph{Extension.}

Consider an arbitrary $i \in [n]$. Since we assume $\xb= \vone_{\{i\}}$ is sorted, we are without loss of generality considering $ \vone_{\{1\}}= (1, 0 , \ldots , 0, 0, \ldots 0)^\top$. Therefore,  we have $p_\xb (S_1) =  x_1- x_{2} = 1-0=1 $ and for each $j > 1$ we have $p_\xb (S_j) =  x_j - x_{j+1} = 0-0=0 $. The only non-zero probability is  $p_\xb (S_1)$, and so
$$\barf(\vone_{\{1\}}) = \sum_{j=1}^n p_\xb(S_j) f(S_j)= f(S_1) = f(\{1\}).$$

\paragraph{Continuity.} The proof of continuity of the singleton extension is a simple adaptation of the proof used for the Lova\'{s}z extension, which we omit.
 
\subsection{Permutations and Involutory Extension.} 
 \paragraph{Feasibility.}
 It is known that every elementary permutation matrix is involutory, i.e., $\mathbf{SS=I}$. Given such an elementary permutation matrix $\mathbf{S}$, since $\mathbf{S(S} \mathbf{x}) = \mathbf{S}p_\xb= \mathbf{x}$, the constraint $\sum_{S \subseteq [n]} y_S  \mathbf{1}_S  = \xb$ is satisfied. Furthermore,  $\sum_{S \subseteq [n]} y_S=1$ can be secured if $\xb$ is in the simplex, since the sum of the elements of a vector is invariant to permutations of the entries.
 
\paragraph{Extension.}
If the permutation has a fixed point at the maximum element of $\mathbf{x}$, i.e., it maps the maximum element to itself, then any elementary permutation matrix with such a fixed point yields an extension on singleton vectors. Without loss of generality, let $\xb = \mathbf{e}_1$, where $\mathbf{e}_1$ is the standard basis vector in $\mathbb{R}^n$. Then $\mathbf{Se}_1 = \mathbf{e}_1$ and therefore $p_\xb(\mathbf{e}_1) = 1 $. This in turn implies $\barf(\mathbf{e}_1)=1\cdot f(\mathbf{e}_1)$. This argument can be easily applied to all singleton vectors.

\paragraph{Continuity.}
The permutation matrix $\mathbf{S}$ can be chosen in advance for each $\xb$ in the simplex. Since $p_\xb = \mathbf{S}\xb$, the probabilities are piecewise-linear and each piece is determined by the fixed point induced by the maximum element of $\xb$. Consequently, $p_\xb$ depends continuously on $\xb$.

\subsection{Multilinear extension.} 

Recall that the multiliniear extension is defined via $p_\xb(S)= \prod_{i\in S }x_i \prod_{i\notin S }(1-x_i)$ supported on  all subsets $S \subseteq [n]$
 in general.
 \paragraph{Feasibility.}

 The definition of  $p_\xb(S)$ is equivalent to:
 $$p_\xb(S)= \prod_{i=1 }^n x_i^{y_i} (1-x_i)^{1-y_i}$$
 where $y_i=1$ if $i \in S$ and zero otherwise. That is, $p_\xb(S)$ is the  product of $n$ independent Bernoulli distributions. So we clearly have $p_\xb(S)\geq 0$ and $\sum_{S \subseteq [n]} p_\xb(S)= 1$. The final feasibility condition, that $\sum_{S \subseteq [n]} p_\xb(S)\cdot \vone_{S}= \xb$ can be checked by induction on $n$. For $n=1$ there are only two sets: $\{1\}$ and the empty set. And clearly $p_\xb(\{1\})\cdot \vone_{\{1\}} = x_1 (1-x_1)^{0}= x_1$, so we have the base case.

\paragraph{Extension.}

For any $S \subseteq [n]$ we have $p_{\vone_S}(S) = \prod_{i\in S }x_i \prod_{i\notin S }(1-x_i) = \prod_{i\in S }1 \prod_{i\notin S }(1-0) = 1$. So $\barf(\vone_S) = \mathbb{E}_{T \sim p_\xb} f(T) = f(S)$.
\paragraph{Continuity.}
Fix and $S \subseteq [n]$. Again we check Lipschitzness. We use $\partial_{x_k}$ to denote the derivative operator with respect to $x_k$. If $k \in S$ we have 
$$ \abs{\partial_{x_k} p_{\vone_S}(S)} = \abs{ \partial_{x_k}\prod_{i\in S }x_i \prod_{i\notin S }(1-x_i)} = \prod_{i\in S\setminus \{k\} }x_i \prod_{i\notin S }(1-x_i) \leq 1.$$
Similarly, if $k \notin S$ we have,
$$ \abs{\partial_{x_k} p_{\vone_S}(S)} = \abs{ \partial_{x_k}\prod_{i\in S }x_i \prod_{i\notin S }(1-x_i)} = \abs{ - \prod_{i\in S }x_i \prod_{i\notin S\cup \{k\} }(1-x_i)} \leq 1.$$
Hence the spectral norm of the Jacobian $J p_\xb(S)$ is bounded, and so $\xb \mapsto p_\xb(S)$ is a Lipschitz map.

\section{Neural Set Function Extensions}\label{app: neural SFE proofs}

This section re-states and proves the results from Section \ref{sec: neural SFEs}. To start, recall the definition of the primal LP:
\begin{align*}
     \underset{\mathbf{z},b}{\max}\{ \xb^{\top}\zb +b\}, 
    \ \text{ where } \   (\mathbf{z},b) \in \mathbb{R}^n \times \mathbb{R} \text{ and } \mathbf{1}_S ^\top \mathbf{z} +b  \leq f(S) \  \text{ for all } S \subseteq [n].  
\end{align*}
and primal SDP:
 \begin{align}
     \max_{\mathbf{Z} 	\succeq 0, b \in \mathbb{R}} \{\text{Tr}(\mathbf{X^\top Z}) + b\} \text{ subject to } \frac{1}{2}\text{Tr}((\mathbf{1}_S \mathbf{1}_T^\top + \mathbf{1}_T \mathbf{1}_S^\top) \mathbf{Z}) + b \leq f(S\cap T) \text{ for } S,T \subseteq [n].
 \end{align}
\begin{proposition*}
(Containment of LP in SDP) For any $\xb \in [0,1]^n$, 
define $\mathbf{X} = \sqrt{\xb} \sqrt{\xb}^\top$ with the square-root taken entry-wise.  Then, for any $(\zb,b) \in \mathbb{R}^n_+ \times \mathbb{R}$ that is primal LP feasible, the pair  $(\mathbf{Z} ,b)$ where $\mathbf{Z}=\text{diag}(\zb)$, is primal SDP feasible and the objective values agree:  $\text{Tr}(\mathbf{X^\top Z}) =\zb^\top \xb$.
\end{proposition*}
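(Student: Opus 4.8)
The plan is a direct verification that the pair $(\mathbf{Z},b)$ with $\mathbf{Z}=\mathrm{diag}(\zb)$ satisfies the three requirements in turn: $\mathbf{Z}\succeq 0$, primal SDP feasibility, and the claimed objective identity. Positive semidefiniteness is immediate, since $\zb\in\mathbb{R}^n_+$ forces every diagonal entry of $\mathbf{Z}$ to be nonnegative, and a diagonal matrix with nonnegative entries is PSD.

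The main step is to simplify the left-hand side of each primal SDP constraint, exploiting that $\mathbf{Z}$ is diagonal. Fixing $S,T\subseteq[n]$ and writing $Z_{ij}=z_i\delta_{ij}$, I would compute $\mathrm{Tr}(\mathbf{1}_S\mathbf{1}_T^\top\mathbf{Z})=\mathbf{1}_T^\top\mathbf{Z}\,\mathbf{1}_S=\sum_{i\in S\cap T}z_i$ and, symmetrically, $\mathrm{Tr}(\mathbf{1}_T\mathbf{1}_S^\top\mathbf{Z})=\sum_{i\in S\cap T}z_i$, so that
\begin{align*}
\frac{1}{2}\,\mathrm{Tr}\big((\mathbf{1}_S\mathbf{1}_T^\top+\mathbf{1}_T\mathbf{1}_S^\top)\mathbf{Z}\big)+b \;=\; \mathbf{1}_{S\cap T}^\top\zb+b.
\end{align*}
The right-hand side is bounded above by $f(S\cap T)$ because that is exactly the primal LP constraint indexed by the set $S\cap T$, and $(\zb,b)$ is primal LP feasible with the LP constraints ranging over \emph{all} subsets of $[n]$ (if $f(S\cap T)=\infty$ the constraint is vacuous). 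Hence every primal SDP constraint holds. I note that $\mathbf{X}$ does not enter this part at all: feasibility concerns only $\mathbf{Z}$ and $b$.

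For the objective identity, since $\mathbf{X}$ is symmetric and $\mathbf{Z}$ is diagonal, $\mathrm{Tr}(\mathbf{X}^\top\mathbf{Z})=\mathrm{Tr}(\mathbf{X}\mathbf{Z})=\sum_i X_{ii}z_i$; and by construction $X_{ii}=(\sqrt{\xb}\sqrt{\xb}^\top)_{ii}=\sqrt{x_i}\cdot\sqrt{x_i}=x_i$, so $\mathrm{Tr}(\mathbf{X}^\top\mathbf{Z})=\sum_i x_i z_i=\zb^\top\xb$, as claimed. I do not expect any genuine obstacle here: the only thing requiring care is the elementary trace computation that collapses the quadratic form against the diagonal matrix $\mathbf{Z}$ into a sum over $S\cap T$, which is precisely what makes the SDP constraint reduce to the primal LP constraint on that set.
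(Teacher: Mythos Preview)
Your proof is correct and follows essentially the same approach as the paper: both reduce the SDP constraint to $\mathbf{1}_{S\cap T}^\top\zb+b\le f(S\cap T)$ by exploiting that $\mathbf{Z}$ is diagonal, then invoke LP feasibility on the set $S\cap T$, and both verify the objective identity via the diagonal of $\mathbf{X}$. You additionally make explicit the check $\mathbf{Z}\succeq 0$, which the paper's proof leaves implicit.
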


\begin{proof}
We start with the feasibility claim. Suppose that $(\zb,b) \in \mathbb{R}^n_+ \times \mathbb{R}$ is a feasible solution to the primal LP. We must show that $(\mathbf{Z} ,b)$ is a feasible solution to the primal SDP with $\mathbf{X} = \sqrt{\xb} \sqrt{\xb}^\top$ and where $\mathbf{Z}=\text{diag}(\zb)$.

Recall the general formula for the trace of a matrix product: $\text{Tr}(\mathbf{AB}) = \sum_{i,j} A_{ij}B_{ji}$. With this in mind, and noting that the $(i,j)$ entry of $\mathbf{1}_S \mathbf{1}_T^\top$ is equal to $1$ if $i,j \in S\cap T$, and zero otherwise, we have  for any $S,T \subseteq [n]$ that 
\begin{align*}
   \frac{1}{2}\text{Tr}((\mathbf{1}_S \mathbf{1}_T^\top + \mathbf{1}_T \mathbf{1}_S^\top) \mathbf{Z}) =  \text{Tr}(\mathbf{1}_S \mathbf{1}_T^\top \mathbf{Z}) + b &= \sum_{i,j=1}^n (\mathbf{1}_S \mathbf{1}_T^\top)_{ij} \cdot \text{diag}(\zb)_{ij} + b\\
    &=\sum_{i,j \in S\cap T} (\mathbf{1}_S \mathbf{1}_T^\top)_{ij} \cdot \text{diag}(\zb)_{ij} + b \\
     &=\sum_{i,j \in S\cap T}  \text{diag}(\zb)_{ij} + b \\
      &=\sum_{i \in S\cap T}  z_i + b \\
       &=\vone_{S\cap T} ^\top \zb + b \\
       &\leq f(S\cap T)
\end{align*}
showing SDP feasibility. That the objective values agree is easily seen since:
\begin{align*}
\text{Tr}(\mathbf{ZX}) =  \sum_{i,j=1}^n \text{diag}(\zb)_{ij} \cdot \sqrt{x_i} \sqrt{x_j}  =\sum_{i=1}^n z_i \cdot \sqrt{x_i} \sqrt{x_i}  = \xb^{\top}\zb.
\end{align*}
\end{proof}

Next, we provide a proof for the construction of neural extensions.
Recall the statement of the main result. 
\begin{proposition*}
Let $p_\xb$ induce a scalar SFE of $f$. For $\mathbf{X} \in \mathbb{S}_+^n$ with distinct eigenvalues, consider the decomposition $\mathbf{X} = \sum_{i=1}^n \lambda_i \xb_i \xb_i^\top$ and fix
\begin{align}
  p_\mathbf{X}(S, T) = \sum_{i=1}^n \lambda_i \, p_{\xb_i}(S) p_{\xb_i}(T)   \text{ for all } S,T \subseteq [n]. 
\end{align}
Then, $ p_\mathbf{X}$ defines a neural SFE $\barf$ at $\mathbf{X}$.
\end{proposition*}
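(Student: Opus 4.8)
The plan is to verify, one clause at a time, that the candidate coefficients $y_{S,T}=p_\mathbf{X}(S,T)$ satisfy the definition of a neural SFE: dual SDP feasibility, continuity of $\mathbf{X}\mapsto p_\mathbf{X}(S,T)$, and validity $\barf(\mathbf{1}_S\mathbf{1}_S^\top)=f(S)$. The workhorses throughout are the two identities that hold because each $p_{\xb_i}$ is itself a feasible dual LP solution (a scalar SFE), namely $\sum_{S}p_{\xb_i}(S)=1$ and $\sum_{S}p_{\xb_i}(S)\mathbf{1}_S=\xb_i$, together with $\lambda_i\ge 0$ (from $\mathbf{X}\succeq 0$) and $p_{\xb_i}\ge 0$.

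\textbf{Feasibility.} Nonnegativity of $p_\mathbf{X}(S,T)$ is immediate. For the remaining constraints I would pull the sum over eigenvectors outside. First, $\sum_{S,T}p_\mathbf{X}(S,T)=\sum_i\lambda_i\big(\sum_S p_{\xb_i}(S)\big)^2=\sum_i\lambda_i=\Tr(\mathbf{X})$, and the normalization $\sum_{S,T}p_\mathbf{X}(S,T)=1$ is then imposed exactly as in the scalar case, by restricting the domain or moving any residual mass to the pair $(\emptyset,\emptyset)$ --- which changes neither the matrix constraint below (since $\mathbf{1}_\emptyset\mathbf{1}_\emptyset^\top=0$) nor $\barf$ (since $f(\emptyset)=0$). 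Second, using the symmetry $p_\mathbf{X}(S,T)=p_\mathbf{X}(T,S)$ to drop the symmetrization and then the barycenter identity,
\begin{align*}
\sum_{S,T}\tfrac12 p_\mathbf{X}(S,T)\big(\mathbf{1}_S\mathbf{1}_T^\top+\mathbf{1}_T\mathbf{1}_S^\top\big)
&=\sum_{i}\lambda_i\Big(\sum_S p_{\xb_i}(S)\mathbf{1}_S\Big)\Big(\sum_T p_{\xb_i}(T)\mathbf{1}_T\Big)^\top\\
&=\sum_{i}\lambda_i\,\xb_i\xb_i^\top\;=\;\mathbf{X},
\end{align*}
so the linear matrix inequality of the dual SDP holds with \emph{equality}, and $(y_{S,T})$ is dual SDP feasible.

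\textbf{Continuity.} I would write $p_\mathbf{X}(S,T)$ as the composition $\mathbf{X}\mapsto(\lambda_i,\xb_i)_i\mapsto\sum_i\lambda_i\,p_{\xb_i}(S)\,p_{\xb_i}(T)$. The eigenvalues depend continuously on $\mathbf{X}$ unconditionally (Weyl); the distinct-eigenvalue hypothesis forces a strictly positive gap around each $\lambda_i$, so the sign-normalized eigenvectors are continuous at $\mathbf{X}$ by the Davis--Kahan $\sin\Theta$ theorem in the quantitative form of \citep{yu2015useful}; and $\xb\mapsto p_\xb(S)$ is continuous by the definition of a scalar SFE (precomposed with the smooth sigmoid if the reparameterization of Section \ref{sec: constructing neural SFEs} is used). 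Composing these gives continuity of $p_\mathbf{X}(S,T)$ at $\mathbf{X}$, hence of the finite sum $\barf(\mathbf{X})=\sum_{S,T}p_\mathbf{X}(S,T)f(S\cap T)$.

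\textbf{Validity, and the main obstacle.} To evaluate at $\mathbf{X}=\mathbf{1}_S\mathbf{1}_S^\top$ I would exploit that $\mathbf{1}_S$ is a vertex of $[0,1]^n$, which rigidly determines the scalar SFE there: the only $p_{\mathbf{1}_S}\ge 0$ with $\sum_{S'}p_{\mathbf{1}_S}(S')\mathbf{1}_{S'}=\mathbf{1}_S$ and $\sum_{S'}p_{\mathbf{1}_S}(S')=1$ is the point mass $\delta_S$ (for $i\notin S$ the $i$-th coordinate constraint kills every $S'\ni i$; for $i\in S$ it, together with the total-mass constraint, kills every $S'\not\ni i$). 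Using the rank-one decomposition $\mathbf{1}_S\mathbf{1}_S^\top=1\cdot\mathbf{1}_S\mathbf{1}_S^\top$ then gives $p_\mathbf{X}(S',T')=\delta_S(S')\delta_S(T')$, whence $\barf(\mathbf{1}_S\mathbf{1}_S^\top)=f(S\cap S)=f(S)$. The step requiring the most care --- and the one I would flag as the main obstacle --- is exactly this: at the embedded sets $\mathbf{1}_S\mathbf{1}_S^\top$ the eigenvalues are \emph{not} distinct, so the eigendecomposition is neither unique nor continuous there, and validity must be argued through the natural rank-one decomposition rather than the eigendecomposition; this is legitimate because the definition only requires continuity of $p_\mathbf{X}$ at the point under consideration, so the choice of decomposition may be changed freely on the measure-zero set of low-rank matrices. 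Apart from this, the only substantive external input is the Davis--Kahan bound in the continuity step; everything else is a direct unwinding of the two scalar SFE identities.
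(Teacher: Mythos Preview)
Your proof is correct and follows essentially the same approach as the paper: feasibility by expanding $\mathbf{v}_i\mathbf{v}_i^\top$ via the scalar SFE barycentric identity, continuity via Davis--Kahan, and validity via the rank-one structure of $\mathbf{1}_S\mathbf{1}_S^\top$. Your validity step is in fact cleaner than the paper's --- you derive $p_{\mathbf{1}_S}=\delta_S$ directly from dual LP feasibility at a hypercube vertex, whereas the paper asserts $p_\mathbf{X}(S,S)=1$ more informally --- and you correctly flag the distinct-eigenvalue tension at $\mathbf{1}_S\mathbf{1}_S^\top$, which the paper does not address explicitly.
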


\begin{proof} 
We begin by showing through the eigendecomposition of $\mathbf{X}$ that the  $\barf$ defined by  $ p_\mathbf{X}(S, T) $ is dual SDP feasible. It is clear that $\sum_{S,T}  p_\mathbf{X}(S, T) = 1$ as long as $\sum_{i=1}^n \lambda_i =1$, which can be easily enforced by appropriate normalization of $\mathbf{X}$. Recall  from the eigendecomposition we have $\mathbf{X}=\sum_{i=1}^{n} \lambda_i \mathbf{v}_i\mathbf{v}_i^\top$ where we have fixed each $\mathbf{v}_i \in [0,1]^n$ through a sigmoid. Using the scalar SFE $p_\xb$ we may write each $\mathbf{v}_i$ as a convex combination $\mathbf{v}_i = \sum_S p_{\mathbf{v}_i}(S) \mathbf{1}_S$. 
For each $i$ we may use this representation to re-express the outer product of $\mathbf{v}_i$ with itself:
\begin{align*}
    \mathbf{v}_i \mathbf{v}_i^{\top} &=\big (  \sum_S p_{\mathbf{v}_i}(S) \mathbf{1}_S \big ) \big (  \sum_T p_{\mathbf{v}_i}(T) \mathbf{1}_T \big )^\top \\
    &= {\sum_{S}p_{\mathbf{v}_i}(S)^2 \mathbf{1}_{S}\mathbf{1}_{S}^{\top}} + \sum_{S \neq T}p_{\mathbf{v}_i}(S)p_{\mathbf{v}_i}(T)(\mathbf{1}_{T}\mathbf{1}_{S}^{\top} + \mathbf{1}_{S}\mathbf{1}_{T}^{\top}) \\
    &= \sum_{S,T \subseteq [n]}p_\mathbf{v_i}(S)p_\mathbf{v_i} (T)(\mathbf{1}_{S} \mathbf{1}_{T}^\top+ \mathbf{1}_{T} \mathbf{1}_{S}^\top)
\end{align*}
Summing over all eigenvectors $\mathbf{v}_i$ yields the relation $\mathbf{X}=\sum_{S,T \subseteq [n]}p_\mathbf{X}(S,T)(\mathbf{1}_{S} \mathbf{1}_{T}^\top+ \mathbf{1}_{T} \mathbf{1}_{S}^\top)$, proving dual SDP feasibility.

Next, consider an input $\mathbf{X} = \mathbf{1}_S \mathbf{1}_S^\top$. In this case, the only eigenvector is $\mathbf{1}_S $ with eigenvalue $\lambda=|S|$ since $ \mathbf{X}\mathbf{1}_S=\mathbf{1}_S (\mathbf{1}_S^\top \mathbf{1}_S) = \mathbf{1}_S |S| $. That is, $ p_\mathbf{X}(T', T) =  p_{\mathbf{1}_S}(T') p_{\mathbf{1}_S}(T)$.

For $\mathbf{X}=\mathbf{1}_S\mathbf{1}_S^\top$, $\mathbf{1}_S $ is clearly an eigenvector with eigenvalue $\lambda = |S|$
because $ \mathbf{X}\mathbf{1}_S=\mathbf{1}_S (\mathbf{1}_S^\top \mathbf{1}_S) = \mathbf{1}_S |S| $.
So, taking $\mathbf{\bar{1}}_S = \mathbf{1}_S / \sqrt{|S|}$ to be the normalized eigenvector of $\mathbf{X}$, we have $\mathbf{X} = |S|\mathbf{\bar{1}}_S\mathbf{\bar{1}}_S^\top = |S| \bigg (\frac{\mathbf{1}_S}{ \sqrt{|S|}} \bigg )\bigg (\frac{\mathbf{1}_S}{ \sqrt{|S|}}\bigg )^\top =p_\mathbf{X}(S,S)\mathbf{1}_S\mathbf{1}_S^\top$ for $ p_\mathbf{X}(S,S)=1$.
Therefore, the corresponding neural SFE is
\begin{align*}
    \barf(\mathbf{1}_S\mathbf{1}_S^\top)= p_\mathbf{X}(S,S)f(S \cap S) = f(S).
\end{align*}

All that remains is to show continuity of neural SFEs. Since the scalar SFE $p_\xb$ is continuous in $\xb$ by assumption, all that remains is to show that the map sending $\mathbf{X}$ to its eigenvector with $i$-th largest eigenvalue is continuous. We handle sign flip invariance of eignevectors by assuming a standard choice for eigenvector signs---e.g., by flipping the sign where necessary to ensure that the first non-zero coordinate is greater than zero.  The continuity of the mapping $\mathbf{X} \mapsto \mathbf{v}_i$  follows directly from Theorem 2 from  \cite{yu2015useful}, which is a variant of the Davis--Kahan theorem. The result shows that the angle between the $i$-th eigenspaces of two matrices $\mathbf{X}$ and $\mathbf{X}'$ goes to zero in the limit as  $\mathbf{X} \rightarrow \mathbf{X}'$. 
\end{proof}
\section{General Experimental Background Information}\label{app: general expt setup}

\subsection{Hardware and Software Setup}

All training runs were done on a single GPU at a time. Experiments were either run on 1) a server with 8 NVIDIA RTX 2080 Ti GPUs, or 2) 4 NVIDIA RTX 2080 Ti GPUs. All experiments are run using Python, specifically the PyTorch~\citep{paszke2019pytorch} framework (\href{https://github.com/pytorch/pytorch/blob/master/LICENSE}{see licence here}). For GNN specific functionality, such as graph data batching,  use the PyTorch Geometric (PyG)~\citep{fey2019fast} (MIT License).

We shall open source our code with MIT License, and have provided anonymized code as part of the supplementary material for reviewers. 

\subsection{Data Details}

This paper uses five graph datasets: ENZYMES, PROTEINS, IMDB-BINART, MUTAG, and COLLAB. All data is accessed via the standardized PyG API. In the case of COLLAB, which has 5000 samples available, we subsample the first 1000 graphs only for training efficiency. All experiments Use a  train/val/test split ratio of 60/30/10, which is done in exactly one consistent way across all experiments for each dataset.

\section{Unsupervised Neural Combinatorial Optimization Experiments}\label{app: unsup comb}
 All methods use the same GNN backbone: a combination of GAT \cite{velivckovic2018graph} and Gated Graph Convolution layer \citep{yujia2016gated}. We use the Adam optimizer \cite{kingma2014adam} with initial $lr=10^{-4}$ and default PyTorch settings for other parameters \cite{paszke2019pytorch}.   We use grid search HPO over batch size $\{4,32,64\}$, number of GNN layers $\{6,10,16\}$ network width $\{64,128,256\}$. All models are trained for $200$ epochs. For the model with the best validation performance, we report the test performance and the standard deviation of performance over test graphs as a measure of method reliability. 
 
\subsection{Discrete Objectives}\label{ap: discreteObjectives}

\paragraph{Maximum Clique.}

For the maximum clique problem, we could simply take $f$ to compute the clique size (with the size being zero if $S$ is not a clique). However, we found that this objective led to poor results and unstable training dynamics. So, instead, we select a discrete objective that yielded the much more stable results across datasets. It is defined for a graph $G = ([n], E)$ as,
\begin{align}
    f_{\text{MaxClique}}(S;G) = w(S)q^{c}(S), \label{eq:clique_objective}
\end{align}
where $w$ is a measure of size of $S$ and $q$ measures the density of edges within $S$ (i.e., distance from being a clique). The scalar $c$ is a constant, taken to be $c=2$ in all cases except REINFORCE for which $c=2$ proved ineffective, so we use $c=4$ instead. Specifically,  $w(S) = \sum_{i,j \in S}\mathbf{1}\{ (i,j) \in E\}$ simply counts up all the edges between nodes in $S$, and $q(S)=-2w(S)/(|S|^{2}-|S|)$ is the ratio (with a sign flip) between the number of edges in $S$, and the number of undirected edges $(|S|^{2}-|S|)/2$ there would be in a clique of size $|S|$. If $G$ were directed, simply remove the factor of $2$. Note that this $f$ is minimized when $S$ is a maximum clique. 

\paragraph{Maximum Independent Set.}

Similarly for maximum independent set we use the discrete objective,
\begin{align}
    f_{\text{MIS}}(S;G) = w(S)q^{c}(S), \label{eq:clique_objective}
\end{align}
where $w$ is a measure of size of $S$ and $q$ measures the number of edges  between nodes in $S$ (the number should be zero for an independent set), and $c=2$ as before. Specifically, we take  $w(S) = |S|/n$, and $q(s) = 2 \sum_{i,j \in S}\mathbf{1}\{ (i,j) \in E\} / (|S|^{2}-|S|)$, as before.

\subsection{Neural SFE details.}

All Neural SFEs, unless otherwise stated, use the top $k=4$ eigenvectors corresponding to the largest eigenvalues. This is an important efficiency saving step, since with $k=n$, i.e., using all eigenvectors, the resulting Neural Lova\'{s}z extension requires $O(n^2)$ set function evaluations, compared to $O(n)$ for  the scalar Lova\'{s}z  extension. By only using the top $k$ we reduce the number of evaluations to $O(kn)$. Wall clock runtime experiments given in Figure \ref{fig: runtime ablation} show that the runtime of the Neural Lova\'{s}z extension is around $\times k$ its scalar counterpart, and that the performance of the neural extension gradually increases then saturates when $k$ gets large. To minimize compute overheads we pick the smallest $k$ at which performance saturation approximately occurs. 

Instead of calling the pre-implemented PyTorch  eigensolver  \texttt{torch.linalg.eigh}, which  calls LAPACK routines, we use the power method to approximate the first $k$ eignevectors of $\mathbf{X}$. This is because we found the PyTorch function to be too numerically unstable in our case. In contrast, we found the power method, which approximates eigenvectors using simple recursively defined polynomials of $\mathbf{X}$, to be significantly more reliable. In all cases we run the power method for $5$ iterations, which we found to be sufficient for convergence.

\subsection{Baselines.}

This section discusses various implementation details of the baseline methods we used. The basic training pipeline is kept identical to SFEs, unless explicitly said otherwise. Namely, we use nearly identical model architectures, identical data loading, and identical HPO parameter grids.

\paragraph{REINFORCE.}
We compared with REINFORCE (\cite{williams1992simple}) which enables backpropagation through (discrete) black-box functions. We opt for a simple instantiation for the score estimator
\begin{align}
    \hat{g}_{\text{REINFORCE}} = f(S)\frac{\partial}{\partial \theta}\log p(S|\theta),
\end{align}
where $p(S|\theta)= \prod_{i \in S} p_i \prod_{j \notin S}(1-p_j) $, i.e., each node is selected independently with probability $p_i = g_{\theta}(\mathbf{y})$ for $i=1,2,\dots ,n$, where $g_\theta$ is a neural network and $\mathbf{y}$ some input attributes. We maximize the expected reward, i.e.,
\begin{align}
    L_{\text{REINFORCE}}(\theta) = \E_{S \sim \theta}[ \hat{g}_{\text{REINFORCE}}].
\end{align}
For all experiments with REINFORCE, the expected reward is computed over 250 sampled actions $S$ which is approximately the number of function evaluations of neural SFEs in most of the datasets.
Here, $f$ is taken to be the corresponding discrete objective of each problem (as described earlier in section \ref{ap: discreteObjectives}). For maximum clique, we normalize rewards $f(S)$ by removing the mean and dividing by the standard deviation. For the maximum independent set, the same strategy led to severe instability during training. To alleviate the issue, we introduced an additional modification to the rewards: among the sampled actions $S$, only the ones that achieved higher than average reward were retained and the rewards of the rest were set to 0. This led to more stable results in most datasets, with the exception of COLLAB were the trick was not sufficient. 

These issues highlight the instability of the score function estimator in this kind of setting. Additionally, we experimented by including simple control variates (baselines). These were: i) a simple greedy baseline obtained by running a greedy algorithm on each input graph ii) a simple uniform distribution baseline, where actions $S$ were sampled uniformly at random. Unfortunately, we were not able to obtain any consistent boost in either performance or stability using those techniques. Finally, to improve stability, the architectures employed with REINFORCE were slightly modified according to the problem. For example, for the independent set we additionally applied a sigmoid to the outputs of the final layer. 

\paragraph{Erdos Goes Neural.}We compare with recent work on unsupervised combinatorial optimization \citep{karalias2020erdos}. 
We use the probabilistic methodology described in the paper to obtain a loss function for each problem.
For the MaxClique, we use the loss provided in the paper, where for an input graph $G=([n],E)$ and learned probabilities $\mathbf{p}$ it is calculated by
\begin{align}
    L_{\text{Clique}}(\mathbf{p};G) = (\beta+1)\sum_{(i,j) \in E}{w_{ij}p_ip_j}  + \frac{\beta}{2}\sum_{v_i \neq v_j } {p_ip_j}.
\end{align} 
We omit additive constants as in practice they not affect the optimization. 
For the maximum independent set, we follow the methodology from the paper to derive the following loss:
\begin{align}
     L_{\text{IndepSet}}(\mathbf{p};G) =  \beta\sum_{(i,j) \in E}{w_{ij}p_ip_j}-\sum_{v_i \in V}p_i.
\end{align}
$\beta$ was tuned through a simple line search over a few possible values in each case. Following the implementation of the original paper, we use the same simple decoding algorithm to obtain a discrete solution from the learned probabilities.

\paragraph{Straight Through Estimator.}We also compared with the Straight-Through gradient estimator \citep{bengio2013estimating}. This estimator can be used to pass gradients through sampling and thresholding operations, by assuming in the backward pass that the operation is the identity. In order to obtain a working baseline with the straight-through estimator, we generate level sets according to the ranking of elements in the output vector $\xb$ of the neural network. Specifically, given $\xb\in [0,1]^{n}$ outputs from a neural network, we generate indicator vectors $\mathbf{1}_{S_{k}}$, where $S_{k} = \{j  | \; x_j \geq  x_k \}$ for $k=1,2,\dots,n$.
Then our loss function was computed as
\begin{align}
L_{ST}(\mathbf{x};G) =   \frac{1}{n}\sum_{k=1}^{n} f(\mathbf{1}_{S_{k}}),
\end{align}
where $f$ is the corresponding discrete objective from section \ref{ap: discreteObjectives}. At inference, we select the set that achieves the best value in the objective while complying with the constraints.

\paragraph{Ground truths.}  We obtain the maximum clique size and the maximum independent set size $s$ for each graph by expressing it as a mixed integer program and using the Gurobi solver \citep{gurobi}.



 \begin{figure*}[t] 
  \begin{center}
     \includegraphics[width=\textwidth]{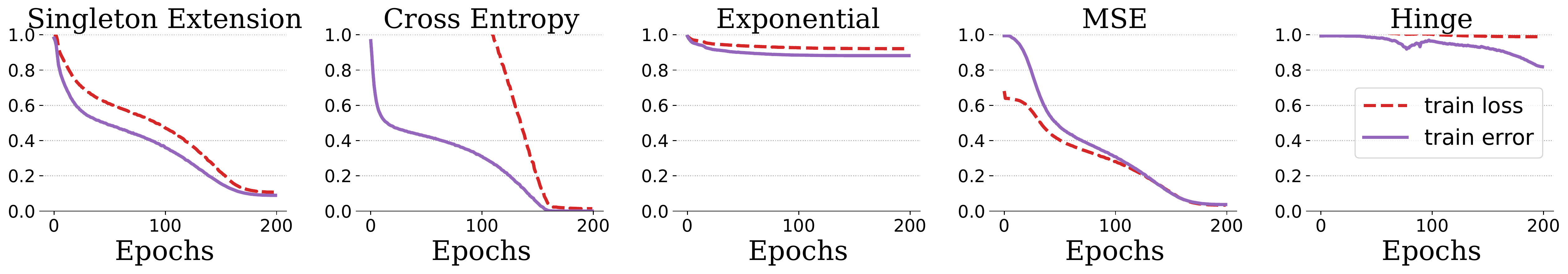}
        \includegraphics[width=\textwidth]{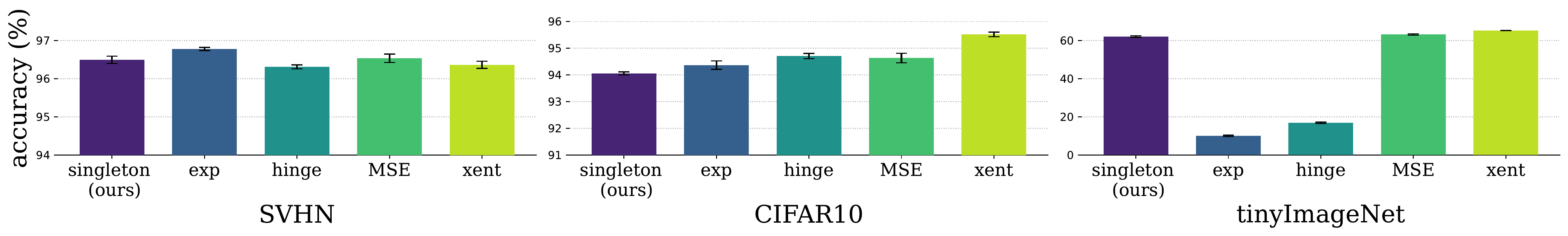}
      \vspace{-10pt}
    \caption{Top: Additional experimental results on the tinyImageNet dataset. Bottom: test accuracies of different losses. The singleton extension performs broadly comparably to other losses.}
    \label{fig: error v loss}
  \end{center}
  \vspace{-5pt}
\end{figure*}

\subsection{$k$-Clique Constraint Satisfaction}
\paragraph{Ground truths.}  As before, we obtain the maximum clique size $s$ for each graph by expressing it as a mixed integer program and using the Gurobi solver \citep{gurobi}. This is converted into a binary label $\mathbf{1}\{ s \geq k\}$ indicating if there is a clique of size $k$ or bigger. 

\paragraph{Implementation details.} The training pipeline, including HPO,  is identical to the MaxClique setup. The only difference comes in the evaluation---at test time the GNN produces an embedding $\xb$, and the largest clique $S$ in the support of $p_\xb$ is selected. The model prediction for the constraint satisfaction problem is then $\mathbf{1}\{ |S|\geq k\}$, indicating whether the GNN found a clique of size $k$ or more. Since this problem is. binary classification problem we compute the F1-score on a validation set, and report as the final result the F1-score of that same model on the   test set.

\section{Training error as an objective}\label{app: image classification}
Recall that for  a $K$-way classifier $h : \mathcal X \rightarrow \mathbb{R}^K$ with 
$\hat{y}(x) = \arg \max_{k =1,\ldots, K} h(x)_k$, we consider the training error  $\frac{1}{n}\sum_{i=1}^n \mathbf{1}\{ y_i \neq \hat{y}(x_i)\}$  calculated over a labeled training dataset $\{(x_i,y_i)\}_{i=1}^n$ to be a discrete non-differentiable loss. The set function in question is $y \mapsto \mathbf{1}\{ y_i \neq y\}$, which we relax using the singleton method described in Section \ref{sec: constructing scalar SFEs}.
\paragraph{Training details.} For all datasests we use a standard ResNet-18 backbone, with a final layer to output a vector of the correct dimension depending on the number of classes in the dataset.  CIFAR10 and tinyImageNet models are trained for 200 epochs, while SVHN uses 100 (which is sufficient for convergence). We use SGD with momentum $mom=0.9$ and weight decay $wd=5 \times 10^{-4}$ and a cosine learning rate schedule. We tune the learning rate for each loss via a simple grid search of  the values $lr \in \{ 0.01, 0.05, 0.1, 0.2\}$. For each loss we select the learning rate with highest accuracy on a validation set, then display the training loss and accuracy for this run.

\renewcommand{\algorithmicrequire}{\textbf{Input:}}
\renewcommand{\algorithmicensure}{\textbf{Output:}}

\definecolor{commentcolor}{RGB}{110,154,155}   
\newcommand{\PyComment}[1]{\ttfamily\textcolor{commentcolor}{\# #1}}  
\newcommand{\codedef}[1]{\ttfamily\textcolor{blue}{#1}}  
\newcommand{\PyCode}[1]{\ttfamily\textcolor{black}{#1}} 

\section{Pseudocode: A forward pass of Scalar and Neural SFEs}
To illustrate the main conceptual steps in the implementation of SFEs, we include two torch-like pseudocode examples for SFEs, one for scalar and one for neural SFEs. The key to the practical implementation of SFEs within PyTorch is that it is only necessary to define the forward pass. Gradients are then handled automatically during the backwards pass. 

Observe that in both Algorithm, \ref{algo:scalarSFE} and Algorithm \ref{algo:NeuralSFE}, there are two key functions that have to be implemented: i) getSupportSets, which generates the sets on which the extension is supported.
    ii) getCoeffs, which generates the coefficients of each set.
Those depend on the choice of the extension and have to be implemented from scratch whenever a new extension is designed. The sets of the neural extension and their coefficients can be calculated from the corresponding scalar ones, using the definition of the Neural SFE and Proposition 3.
\begin{algorithm}[h]
\SetAlgoLined{}
    \codedef{def} ScalarSFE(setFunction, x): \\
    \Indp
    \PyComment{x: n x 1 tensor of embeddings, the output of a neural network \\
    \# n: number of items in ground set (e.g. number of nodes in graph) \\
    }
    \PyCode{setsScalar = getSupportSetsScalar(x)} \PyComment{n x n, i-th column is  $S_i$.} \\
    \PyCode{coeffsScalar = getCoeffsScalar(x)} \PyComment{1 x n: coefficients $y_{S_i}$.} \\
    \PyCode{extension = (coeffsScalar*setFunction(setsScalar)).sum()} \\
    \codedef{return} extension \\
    \Indm 
\caption{\textbf{Scalar} set function extension}
\label{algo:scalarSFE}
\end{algorithm}

\begin{algorithm}[h]
\SetAlgoLined{}
    \codedef{def} NeuralSFE(setFunction, X): \\
    \Indp  
    \PyComment{X: n x d tensor of embeddings, the output of a neural network \\
    \# n: number of items in ground set (e.g. number of nodes in graph) \\
    \# d: embedding dimension} \\
    \PyCode{X = normalize(X, dim=1)} \\
    \PyCode{Gram = X @ X.T} \PyComment{ n x n} \\ 
    \PyCode{eigenvalues, eigenvectors = powerMethod(Gram)} \\
    \PyCode{extension = 0}  \PyComment{initialize variable} \\
    \PyCode{for (eigval,eigvec) in zip(eigenvalues,eigenvectors):} \\
\Indp   
    \PyComment{Compute scalar extension data.} \\
           \PyCode{setsScalar = getSupportSetsScalar(eigvec)} \\
               \PyCode{coeffsScalar = getCoeffsScalar(eigvec)}  \\
               \PyComment{Compute neural extension data from scalar extension data.} \\
           \PyCode{setsNeural = getSupportSetsNeural(setsScalar)} \\
    \PyCode{coeffsNeural = getCoeffsNeural(coeffsScalar)} \\
    \PyCode{extension += eigval*((coeffsNeural*setFunction(setsNeural)).sum())} \\ 
        \Indm 
    \codedef{return} extension\\
    \Indm 
\caption{\textbf{Neural} set function extension}
\label{algo:NeuralSFE}
\end{algorithm}

\section{Further Discussion}

\subsection{Limitations and Future Directions}\label{sec: limitations}

Our SFEs have proven useful for learning solvers for a number of combinatorial optimization problems. However there remain many directions for improvement. One direction of particular interest is to scale our methods to instances with very large $n$. This could include simply considering larger graphs, or problems with larger ground sets---e.g., selecting paths. We believe that a promising approach to this would be to develop localized extensions that are supported on sets corresponding to suitably chosen sub-graphs, which would enable us to build in additional task-specific information about the problem.

\subsection{Broader Impact}\label{sec: broader impact}

Our work focuses on a core machine learning methodological goal of designing neural networks that are able to learn to simulate algorithmic behavior. This program may lead to a number of promising improvements in neural networks such as making their generalization properties more reliable (as with classical algorithms) and more interpretable decision making mechanisms. As well as injecting algorithmic properties into neural network models, our work studies the use of neural networks for solving combinatorial problems. Advances in neural network methods may lead to advances in numerical computing more widely.   Numerical computing in general---and combinatorial optimization in particular---impacts a wide range of human activities, including scientific discovery and logistics planning. Because of this, the methodologies developed in this paper and any potential further developments in this line of work are intrinsically neutral with respect to ethical considerations; the main responsibility lies in their ethical application in any given scenario.


\end{document}